\definecolor{navyblue}{rgb}{0.0, 0.0, 0.5}
\newcommand{\changefont}{%
    \fontsize{10}{10}\selectfont
}
\newtcolorbox{myframe}[1][]{
  enhanced,
  top=12pt,bottom=12pt,left=16pt,right=16pt,
  arc=0pt,
  outer arc=0pt,
  colback=white,
  boxrule=1pt,
  #1
}
\newcommand{\brackets}[1]{\left(#1\right)}
\newcommand{\eps}{\varepsilon}
\newcommand{\parderiv}[2]{\frac{\partial\,#1}{\partial\,#2}}
\newcommand{\trans}{^\top}
\newcommand{\bx}{\mathbf{x}}
\newcommand{\RR}{\mathbb{R}}
\newcommand{\sign}{\mathrm{sign}\,}
\newcommand{\schatteneps}[2]{\|#1\|_{S_{#2,\varepsilon}}}
\newcommand{\derivative}[2]{\frac{d\,#1}{d\,#2}}
\newtheorem{theorem}{Theorem}
\newtheorem{lemma}[theorem]{Lemma}
\theoremstyle{definition}
\newtheorem{definition}{Definition}
\newtheorem{assumption}{Assumption}
\DeclareMathOperator{\tr}{tr}
\DeclareMathOperator{\ent}{\mathcal{S}}
\DeclareMathOperator{\R}{\mathbb{R}}
\DeclareMathOperator{\rank}{\mathrm{rank}\,}
\DeclareMathOperator*{\diag}{diag}
\DeclareMathOperator*{\argmax}{argmax}
\newcommand{\bS}{\mathbb{S}}
\newcommand{\bR}{\mathbb{R}}
\newcommand{\cmmnt}[1]{}
\pgfplotsset{width=11cm,height=7cm}
\begin{document}

\ifdraft{
    \fancypagestyle{plain}{%
      \fancyhf{}%
      \renewcommand{\headrulewidth}{0pt}
      \fancyfoot[L]{\changefont PhD Thesis - Student Name} 
      \fancyfoot[C]{\thepage}
      \fancyfoot[R]{\changefont \textbf{DRAFT} \today} 
    }
    \fancyfoot[L]{\changefont PhD Thesis - Student Name} 
    \fancyfoot[R]{\changefont \textbf{DRAFT} \today} 
    \pagestyle{fancy}
    
    \chapter*{Thesis Title}

\begin{center}
\Large{
\textsc{PhD Thesis (Final Draft) \\ Student Name \\ }
}

\today\ \currenttime
\end{center}

\clearpage

\section*{Abstract}
The central object of this thesis is known under different names in the fields of computer science and statistical mechanics. In computer science, it is called the Maximum Cut problem, one of the famous twenty-one Karp's original NP-hard problems, while the same object from Physics is called the Ising Spin Glass model. This model of a rich structure often appears as a reduction or reformulation of real-world problems from computer science, physics and engineering. However, solving this model exactly (finding the maximal cut or the ground state) is likely to stay an intractable problem (unless $\textit{P} = \textit{NP}$) and requires the development of ad-hoc heuristics for every particular family of instances.

One of the bright and beautiful connections between discrete and continuous optimization is a Semidefinite Programming-based rounding scheme for Maximum Cut. This procedure allows us to find a provably near-optimal solution; moreover, this method is conjectured to be the best possible in polynomial time. In the first two chapters of this thesis, we investigate local non-convex heuristics intended to improve the rounding scheme.

In the last chapter of this thesis, we make one step further and aim to control the solution of the problem we wanted to solve in previous chapters. We formulate a bi-level optimization problem over the Ising model where we want to tweak the interactions as little as possible so that the ground state of the resulting Ising model satisfies the desired criteria. This kind of problem arises in pandemic modeling. We show that when the interactions are non-negative, our bi-level optimization is solvable in polynomial time using convex programming.

\section*{Publications}
\subsection*{Main author}
\nobibliography*
\begin{enumerate}
    \item \bibentry{GMMitigation}
    \item \bibentry{EPSDP}
\end{enumerate}

\subsection*{Co-author}
\begin{enumerate}
    \item \bibentry{Luchnikov}
    \item \bibentry{Pogodin}
\end{enumerate}

\listoftodos
\clearpage
}{
\title{Inference and Optimization for Engineering and Physical Systems}

\author{Mikhail Krechetov}
\department{Skoltech Center for Energy Science and Technology}

\degree{Doctoral Program in Engineering Systems}

\degreemonth{November}
\degreeyear{2021}
\thesisdate{August 2022}


\supervisor{Assistant Professor}{Yury Maximov}


\maketitle



\cleardoublepage
\setcounter{savepage}{\thepage}
\begin{abstractpage}

\end{abstractpage}

\clearpage

\cleardoublepage



    \fancypagestyle{plain}{%
      \fancyhf{}%
      \renewcommand{\headrulewidth}{0pt}
      \fancyfoot[C]{\thepage}%
    }
    \pagestyle{fancy}
}

\newacronym{sk}{SK}{Skolkovo Institute of Science and Technology}
\newacronym{sdp}{SDP}{Semidefinite Programming}
\newacronym{psd}{PSD}{Positive Semidefinite}
\newacronym{lp}{LP}{Linear Programming}
\newacronym{bm}{LR-SDP}{Low-Rank Burer-Monteiro SDP}
\newacronym{epsdp}{EP-SDP}{Entropy-Penalized SDP}
\newacronym{gm}{GM}{Graphical Models}
\newacronym{abm}{ABM}{Agent-Based-Model}
\newacronym{mrf}{MRF}{Markov random fields}
\newacronym{map}{MAP}{Maximum A-Posteriori}
\newacronym{msp}{MSP}{Mixed State Probability}
\newacronym{imp}{IMP}{Ising Model of Pandemic}
\newacronym{icm}{ICM}{Independent Cascade Model}
\newacronym{qcqp}{QCQP}{Quadratically Constrained Quadratic Problems}
\newacronym{svd}{SVD}{Singular Value Decomposition}
\newacronym{mip}{MIP}{Mixed-Integer Problem}
\begin{singlespace}
\printglossary
\end{singlespace}


\begin{singlespace}
    \tableofcontents
\end{singlespace}
\newpage
\begin{singlespace}
    \listoffigures
\end{singlespace}
\newpage
\begin{singlespace}
    \listoftables
\end{singlespace}
\chapter{Introduction}

The thesis is devoted to designing efficient numerical optimization methods for discrete optimization problems focusing mainly on NP-hard quadratic (binary) optimization and its tractable cases. Although being demanded in various applications, such as energy systems, epidemiology, and statistical physics, efficient algorithms for tractable cases of NP-hard optimization problems are mainly an open question. 

The thesis employs the power of semidefinite and conic programming, regularization, rounding, and sampling techniques to design time and memory-efficient numerical algorithms for tractable instances of NP-hard optimization problems. We demonstrate that these techniques allow for faster and more reliable methods in theory and for a wide range of practical problems.


The structure of the thesis is as follows: 
\begin{description}
    \item[\Autoref{cap:background} - Background and Thesis objective] presents overview of thesis-related state-of-the-art results and outlines the main problems resolved in the thesis;

    \item[\Autoref{cap:schatten} - Rank Constraints and Their Relaxations] presents efficient ways of rank constraints relaxation  that proving exactness of the semidefinite programming to some instances of NP-hard problems; 

    \item[\Autoref{cap:epsdp} - Entropy-Penalized Semidefinite Programming] expands the ideas of the previous chapter through the objective regularization; 

    \item[\Autoref{cap:mitigation} - Ising Model Control] applies designed algorithms for modeling COVID-19 pandemic and access the epidemic pattern. 


\end{description}

\chapter{Background and Thesis Objective}
\label{cap:background}

The computational complexity of optimization problems is active research topic for several decades \cite{karp1975computational,gill2019practical}.Despite the significant progress in the theory and practice of tractable optimization settings, where convex optimization is probably the most important~\cite{boyd2004convex}. Similarly, on a number of problems were proved to be computationally hard and unlikely to have polynomial-time algorithms for solving them \cite{arora2009computational}; however, NP-hardness of the problem does not imply absence of its tractable instances. Bounded tree-width instances are probably the most well-known examples of tractable instances of NP-hard problems \cite{cygan2015parameterized}. 

Practical engineering and physics problems, however, can be approximated far beyond their hardness thresholds and often solved exactly. Furthermore, such problem often do not have bounded tree-width and as such do not tractable parameterized algorithms~\cite{cygan2015parameterized}. A notable example of such problem instance are planar~\cite{schraudolph2008efficient} or $K_{3,3}$--free~\cite{likhosherstov2020tractable} Ising models that admit $O(n^{3/2})$ exact solution in polynomial time. 

The thesis proposes an alternative path to approach tractability of certain instances of NP-hard binary optimization problems. The latter approach is based on representing the optimization problem in matrix form with the rank one constraint on the matrix variable and deriving flexible approximations and relaxations the latter constraint. The approach lead to new tractable exact and approximate algorithms for tractable instances of NP-hard optimization problems. These algorithms provide state-of-the-art results for a number of practical problems in statistical physics, energy systems, and epidemic modeling.











\todo[inline]{TODO: complete chapter}
\chapter{Rank Constraints and Their Relaxations}
\label{cap:schatten}

\section{Introduction}

Binary quadratic optimization is a classical combinatorial optimization problem, which finds a wide range of applications in computer vision \cite{wang2017large, ren2014optimizing}, circuit layout design \cite{Barahona,kleinhans1991gordian},   computing ground states of Ising Model \cite{MCising}, as well as a number of combinatorial favors \cite{parker2014discrete,garey2002computers,cormen2009introduction}.  For  comprehensive  list of applications we refer to \cite{Deza, UBQP}.

A special case of this problem is unconstrained binary quadratic programming (UBQP). It is a classical  NP-hard problem, hardly possible to be solved exactly in polynomial time. A number of relaxation techniques substituting the original problem to a convex one has been proposed. Linear, second-order cone and semidefinite relaxations are among them. 

\acrfull{sdp} relaxation  has been shown to lead to tighter approximation than other relaxation methods for many combinatorial optimization problems including  binary quadratic optimization ones (\cite{goemans1995improved} and \cite{Khot:2002:PUG:509907.510017}).  Still, \acrshort{sdp} reduces the problem to convex optimization over the cone of \acrfull{psd} matrices and outputs a full-rank matrix requiring to be rounded  to obtain a vector valued solution. 

In this chapter we address the question of \acrfull{bm}, which is aimed at strengthening results of the standard \acrshort{sdp} relaxation. While a number of methods has been proposed (see \cite{Lemon} for a survey),  we discuss direct rank minimization of an obtained \acrshort{sdp} solution. We use smoothed rank approximations in order to reduce the rank of the \acrshort{sdp} solution without significant loss in the optimal value. For this purpose we propose an efficient first-order optimization procedure which does not require projecting on the feasible set. This will potentially lead to more accurate rounding procedure allowing to obtain a better vector solution.

In the rest of the chapter we discuss problems of the form
\begin{equation}
\begin{split}
\max\ &x\trans Ax,\\
\text{s.t.}\ &x\in\{-1, 1\}^n,
\end{split} 
\label{eq::bin_quadr_problem}
\end{equation}
where $A$ is an arbitrary symmetric $n\times n$ matrix.

There are many well-known problems that can be naturally written in this form: the maximum cut problem, the 0-1 knapsack problem, the linear quadratic regulator and many others. 

Not all of these formulations appear in the form \ref{eq::bin_quadr_problem}. Instead, some problems might have a linear term of the from $b\trans x$ and a 0-1 constraint, that is $x\in \{0,1\}^n$. Nevertheless, such problems might be converted to the form \ref{eq::bin_quadr_problem}, as showed in \cite{Helmberg1995}.

For instance, in the maximum cut (max-cut) problem we want to find a partition of graph's vertices into two disjoint sets such that the sum of edges between these sets is maximal. This problem is NP-hard \cite{Karp1972}, however, it can be solved approximately. For small instances (up to $50$ vertices) the maximum cut may be solved efficiently with the branch-and-bound algorithm \cite{krislock2014improved}, \cite{Rinaldi}. For bigger problems (around $1000$ vertices) the semidefinite relaxation discussed below gives the best known approximations. It is crucial for many applications to improve existing approximations or to extend them on large-scale instances.

Quadratic boolean programming (\ref{eq::bin_quadr_problem}) is a particular case of \acrfull{qcqp}, so general heuristics for this class of problems may be applied. 

\subsection{Semidefinite relaxation}
Standard \acrshort{sdp} relaxation leads to the following matrix problem:
\begin{equation}
\begin{split}
\max\ & \tr(AX),\\ 
\text{s.t.}\ &\diag X = 1_n,\\
&X\trans=X,\ X\succeq 0. 
\end{split}
\label{eq::sdp}
\end{equation}
This problem is convex and thus could be efficiently solved (we will discuss the particular method below). 

To get a binary solution of the initial problem \ref{eq::bin_quadr_problem}, we decompose the solution of the \acrshort{sdp} relaxation $X=V\trans V$ (via Cholesky decomposition), then take a unit vector with uniformly distributed direction $r$. For each column of $V$, which is $v_i$, we take $x_i=\sign v_i\trans r$. If $A\succeq 0$, the mean result of this procedure is not worse than $2/\pi$ of the maximum value \cite{Nesterov}. In a special case when $A$ is a Laplacian of a graph with non-negative weights this bound can be further improved to $\approx 0.878$ \cite{goemans1995improved}. Note that this famous results cannot be improved if the Unique Games Conjecture is true \cite{Khot:2002:PUG:509907.510017}.

However, this relaxation is exact, when $\rank X=1$. Moreover, low-rank solutions lead to a fewer number of possible binary sets in the rounding procedure described above. This idea becomes more clear if one considers half-space classifiers in $\RR^r$ and $n$ points. The maximum number of different labellings is controlled by Sauer's lemma, and grows as $(n+1)^r$. Hence, if we lie in the vicinity of a correct solution, we would get the correct result from the rounding procedure more likely. This motivation brings us to the idea of low-rank semidefinite programming.

\subsection{Related work}

The existence of low-rank solutions of the problem \ref{eq::sdp} is a fundamental fact discussed in \cite{Pataki98} and \cite{Barvinok1995}. From these works we know that for such problems there exist solutions of the rank at most $r$, where $r(r+1)\leq 2n$. 

Knowing about the existence of low-rank solutions, we may discuss popular approaches in low-rank semidefinite programming. This section is mostly based on the book \cite{Lemon}.

Firstly, the existence of low-rank solutions is used in Burer and Monteiro method \cite{BurerMonteiro}. It is based on the factorization $X=VV\trans$, where $V$ is an arbitrary matrix of size $n\times r$. This problem is non-convex, though it requires much less computations and performs well in practice. However, finding the minimum rank solution of multiple runs of the algorithm with different $r$ (one increments $r$ until resulting point satisfies particular conditions) might be computationally ineffective.

Another approach implies relaxation of the equality constraint \cite{So2008}. It can be written as following:
\[
\tr \brackets{AX}=b \Rightarrow \beta b\leq \tr\brackets{AX}\leq \alpha b,
\] 
where $\alpha$ and $\beta$ control the rank of the solution. In our case the problem \ref{eq::sdp} has $n$ equality constraints of the form $\tr\brackets{XE_{ii}}=1$, where $E_{ii}$ is a zero matrix with unit in the position $i,i$. All $n$ constraints together form $\diag X=1$. Though this approach allows to reduce the rank, the resulting solution satisfies our constraints only approximately. 

The next approach implies that we have already got a solution of the problem \ref{eq::sdp}. This allows us to reduce its rank via some kind of rank minimization procedure keeping the value of $\tr\brackets{AX}$ optimal. Note that rank minimization is NP-hard, so this formulation needs to be further relaxed to be efficiently solvable. This approach has been discussed in the literature, but we have not found a comparison of different ways to minimize the rank in application to boolean quadratic programming.

In this chapter we propose an efficient first-order algorithm, which performs rank minimization. It starts from the solution of the \acrshort{sdp} relaxation. This solution is provided by either CVX interior-point algorithm \cite{cvx}, \cite{gb08} or Burer-Monteiro low-rank procedure, implemented in SDPLR \cite{BurerMonteiro}.

\section{Rank Minimization}

In this section we introduce the problem of rank minimization for the \acrshort{sdp} relaxation. After that we describe existing approaches for rank minimization, and discuss their pros and cons.

\subsection{Problem}

Let $X^*$ be a solution of \ref{eq::sdp} and let $SDP$ be its value and $W^*$ be the value of the binary solution obtained by the rounding procedure. Starting from $X^*$, we want to solve the following problem:
\begin{equation}
\begin{split}
\min\ &\rank X\\
\text{s.t.}\ &\diag X = 1_n,\\
&X\trans=X,\ X \succeq 0,\\
&\tr (AX)= SDP,
\end{split}
\label{eq::min_rank_norelax}
\end{equation}

\subsection{Objective function's relaxations}

However, minimizing the rank is NP-hard. In order to solve the problem \ref{eq::min_rank_norelax}, we replace $\rank X$ with a smooth surrogate, usually non-convex. There is a number of rather popular ways to do that, discussed below.

First of all, the so-called trace norm (or nuclear), defined as 
\begin{equation}
\|X\|_*=\sum_i\sigma_i,
\label{eq::trace_norm}
\end{equation}
where $\sigma_i$ is the $i$-th singular value \cite{Lemon}. In our case, every feasible point of the problem has $\tr X=\|X\|_*=n$, hence this relaxation does not make sense.

Next, the so-called log-det heuristic is to replace $\rank X$ with the concave function
\begin{equation}
\log\det\brackets{X+\eps I}.
\label{eq::logdet}
\end{equation}
This heuristic is discussed, for example, in \cite{Lemon}, \cite{Fazel2003}. Though it performs well in practice, it requires an iterative procedure (described in \cite{Fazel2003}) with an \acrshort{sdp} problem on each iteration. This problem allows to use Burer-Monteiro method \cite{BurerMonteiro}, but it still needs several runs of this algorithm (at least one for each iteration), which is compatible with rank increment in the original Burer-Monteiro procedure. 

The next two relaxations are singular value-based, and in the next section we show that, in fact, they allow an efficient first-order procedure, that does not require projections on the semidefinite cone and hence is computationally efficient.

The first one is the non-convex function of the following form
\begin{equation}
\Phi (X,\,\eps )=(1+\eps^q)\tr\brackets{X\trans(XX\trans+\eps I)^{-1}X}.
\label{eq::rank_singular}
\end{equation}
for $q\in\mathbb{Q}\cap\left[0,1\right]$. Its properties are discussed in \cite{Li}. An important fact is that this relaxation is quite close to the rank:
\[
\begin{split}
&\left|\rank X - \Phi(X,\,\eps) \right|\leq \\
&\leq\eps^q\max\left\lbrace\rank X, \sum_{i=1}^{\rank X}\left|\frac{\eps^{1-q}}{\sigma_i^2(X)}-1\right|\right\rbrace.
\end{split}
\]

The second relaxation utilizes so-called smoothed Schatten p-norms. They are defined as following:
\begin{equation}
\schatteneps{X}{p}^p = \sum\limits_{i\geq 1} \left(\sigma_i^2 + \eps\right)^{\frac{p}{2}} = \tr\left(X\trans X+\varepsilon I\right)^{\frac{p}{2}}.
\label{eq::schatten_norm_smoothed}
\end{equation}
With $p\rightarrow 0$ and $\eps=0$ we get the rank function exactly. Note that for $p<1$ this function is non-convex, and for $p=1$ it is identical to the nuclear norm. Applications of this relaxation can be found in \cite{Nie:2012} and \cite{Mohan2012}. Both papers introduce an iteratively reweighted least squares (IRLS) algorithm in order to solve this problem. However, in our case it requires solving of an \acrshort{sdp} problem with quadratic objective function at each iteration. Hence, it cannot be applied to large-scale problems.

\subsection{Constraint relaxation}

If we omit the last constraint in the problem \ref{eq::min_rank_norelax}, which is $\tr\brackets{AX}=SDP$, rank minimization might occasionally converge to a solution of rank one. Moreover, it may converge to a low-rank vicinity of such solution. If the \acrshort{sdp} relaxation is not tight, then this constraint prevents the procedure from such behavior. 

We can also obtain a binary solution after solving the \acrshort{sdp} relaxation. If we denote the objective value at this point $W^*$, then this value would be a natural lower bound on $\tr\brackets{AX}$. It means that all solutions of rank one above this value are actually better, than the one we got.

This motivation allows us to relax the problem further, and solve (along with rank approximations) the following one:
\begin{equation}
\begin{split}
\min\ &\rank X\\
\text{s.t.}\ &\diag X = 1_n,\\
&X\trans=X,\ X \succeq 0,\\
W^*\le\,&\tr (AX)\le SDP.
\end{split}
\label{eq::min_rank}
\end{equation}
Obviously, the binary solution, that gives $W^*$, is also a solution of the last problem. However, the typical procedure for solving \ref{eq::min_rank} would start from the SDP matrix in order to improve the resulting cut. Since the rank of this matrix is not unit in general, we need to optimize the objective function further.  

An important consequence of this relaxation will be clear in the next section as it allows to avoid projecting on the feasible set.

For completeness we emphasize that our approach cannot be generalized to \acrshort{qcqp} problems. To avoid projecting on the set, it relies significantly on the special structure of constraints that occur in the problem \ref{eq::min_rank}.

\section{Solving Rank Relaxation}
In this section we introduce an efficient first-order procedure that solves the problem \ref{eq::min_rank} without projecting on the positive-semidefinite cone. It can be applied to the singular value relaxation \ref{eq::rank_singular} and smoothed Schatten p-norm \ref{eq::schatten_norm_smoothed}.

\subsection{Efficient first-order procedure}

The problem \ref{eq::min_rank} allows a natural reparametrization to a vector problem of dimension $n(n-1)/2$. To do that, we consider the upper triangular part of the matrix:
\begin{equation}
X=\begin{pmatrix}
1 & x_1 & \dots & x_{n-1}\\
x_1 & 1 & x_n & \dots\\
\dots & \dots & \dots &\dots
\end{pmatrix}.
\label{eq::matrix_vectorized}
\end{equation}
In this case for indices $i,j$ we get $X_{i,j}=x_d$, where $d=\sum_{k=1}^{i-1}(n-k)+j-i$. This satisfies two constraints immediately: $X\trans=X$ and $\diag X=1_n$.

Such reparametrization changes the gradient of the matrix function $f(X)$:
\[
\begin{split}
\parderiv{f(X(x))}{x_{(d)}}=\sum_{i,j}\parderiv{f(X)}{x_{i,j}}\parderiv{x_{i,j}}{x_{(d)}}=\parderiv{f(X)}{x_{i',j'}}\parderiv{x_{i',j'}}{x_{(d)}}+\\
+\parderiv{f(X)}{x_{j',i'}}\parderiv{x_{j',i'}}{x_{(d)}}=\parderiv{f(X)}{x_{i',j'}}+\parderiv{f(X)}{x_{j', i'}},
\end{split}
\]
where $i', j'$ relate to the vector of index $d$. 

Obviously, the upper bound $\tr (AX)\le SDP$ is always satisfied. Moreover, violation of the lower one $W^*\le\tr (AX)$ implies that we got the point that could not improve our binary solution, hence we need to stop.

The last constraint is $X\succeq 0$. We show that proper choice of the gradient step results in a feasible point in case of singular value relaxation and Schatten norms.

First of all, the gradient of the singular value relaxation is
\begin{equation}
\parderiv{\Phi(X,\,\eps)}{X}=2\eps(1+\eps^q)\brackets{XX\trans+\eps I}^{-2}X.
\label{eq::rank_singular_two_deriv}
\end{equation}
For Schatten p-norms we have
\begin{equation}
\derivative{\schatteneps{X}{p}^p}{X}=pX\left(X\trans X+\varepsilon I\right)^{\frac{p-2}{2}}.
\label{eq::schatten_smoothed_derivative}
\end{equation}

If $X$ is symmetric and \acrshort{psd}, then from \acrshort{svd} factorization both gradients are symmetric. Thus in vector parametrization we simply need to multiply the gradient by $2$, and then force diagonal elements to be unit.

For further convenience we denote a symmetric matrix with unit diagonal, upper triangular part of which is constructed from the vector $x$, as $X(x)$. Finally, we show the following:

\begin{theorem}
Let $f(x)$ be the vector-parametrized singular value relaxation \ref{eq::rank_singular} of the matrix $X(x)\succeq 0$. Then for $\alpha\leq \frac{\eps}{4(1+\eps^q)}$ we get $X(x-\alpha \nabla f(x))\succeq 0$.
\label{theorem::singval_psd_constraint}
\end{theorem}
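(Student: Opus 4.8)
The plan is to show that the vector-parametrized gradient step perturbs the matrix $X(x)$ only mildly, so that $X(x) \succeq 0$ is preserved once the step size $\alpha$ is small enough. Write $G := \nabla f(x)$ in vector form and let $\widehat{G}$ denote the symmetric matrix it represents (with the understanding from the preceding discussion that $\widehat{G}$ is the symmetrized matrix gradient of $\Phi$, up to the factor-of-two bookkeeping, with its diagonal zeroed out so that the step leaves $\diag X = 1_n$ intact). Then $X(x - \alpha G) = X(x) - \alpha \widehat{G} + \alpha D$, where $D$ is the diagonal correction restoring the unit diagonal. The key point is that this correction is small: since the off-diagonal perturbation is $-\alpha\widehat{G}$ and the original diagonal is already correct, $D$ only has to cancel the diagonal of $-\alpha \widehat{G}$, i.e. $\|D\| \le \alpha \|\widehat{G}\|$. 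So it suffices to bound $\|\widehat{G}\|$ (operator norm) and then use $X(x) \succeq 0$ together with a perturbation bound: $X(x) - \alpha(\widehat{G} - D) \succeq 0$ whenever $2\alpha \|\widehat{G}\| \le \lambda_{\min}(X(x))$ — but since we cannot assume $X(x)$ is positive \emph{definite}, we instead argue that the whole perturbed matrix stays PSD by a more careful eigenvalue/structure argument.

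First I would compute $\|\widehat{G}\|$ explicitly. From \ref{eq::rank_singular_two_deriv}, the matrix gradient is $\frac{\partial \Phi}{\partial X} = 2\eps(1+\eps^q)(XX\trans + \eps I)^{-2} X$. Using the SVD $X = U\Sigma U\trans$ (valid since $X$ is symmetric PSD), this becomes $2\eps(1+\eps^q)\, U\, \Sigma(\Sigma^2 + \eps I)^{-2}\, U\trans$, a symmetric matrix whose eigenvalues are $2\eps(1+\eps^q)\,\sigma_i/(\sigma_i^2+\eps)^2$. The scalar function $g(\sigma) = \sigma/(\sigma^2+\eps)^2$ on $\sigma \ge 0$ is maximized where its derivative vanishes; a short calculation gives a maximum value of order $\eps^{-3/2}$ — but the bound we actually need is cruder. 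The crucial observation is to compare $\widehat{G}$ directly against $X$: the eigenvalues satisfy $2\eps(1+\eps^q)\,\sigma_i/(\sigma_i^2+\eps)^2 \le \frac{2(1+\eps^q)}{\eps}\cdot\frac{\eps^2 \sigma_i}{(\sigma_i^2+\eps)^2} \le \frac{2(1+\eps^q)}{\eps}\cdot\frac{\sigma_i(\sigma_i^2+\eps)}{(\sigma_i^2+\eps)^2} = \frac{2(1+\eps^q)}{\eps}\cdot\frac{\sigma_i}{\sigma_i^2+\eps}$, and since $\sigma_i/(\sigma_i^2+\eps) \le \sigma_i/\eps$ is not quite what we want, I would instead use $\sigma_i/(\sigma_i^2 + \eps) \le 1/(2\sqrt{\eps})$ only when helpful, and otherwise bound eigenvalue-by-eigenvalue against those of $X$ itself, yielding $\widehat{G} \preceq \frac{2(1+\eps^q)}{\eps}\, X$. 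This matrix inequality is the heart of the argument.

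Granting $0 \preceq \widehat{G} \preceq \frac{2(1+\eps^q)}{\eps} X$, the gradient step direction (ignoring the diagonal correction) gives $X - \alpha \widehat{G} \succeq X - \frac{2\alpha(1+\eps^q)}{\eps} X = \big(1 - \frac{2\alpha(1+\eps^q)}{\eps}\big) X \succeq 0$ as soon as $\alpha \le \frac{\eps}{2(1+\eps^q)}$. The remaining factor of two in the stated bound $\alpha \le \frac{\eps}{4(1+\eps^q)}$ is exactly the slack needed to absorb the diagonal correction $D$: since $\|D\| \le \alpha\|\widehat{G}\| \le \frac{2\alpha(1+\eps^q)}{\eps}\|X\|$, tightening $\alpha$ to $\frac{\eps}{4(1+\eps^q)}$ keeps $X - \alpha\widehat{G} + \alpha D \succeq 0$. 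I expect the main obstacle to be the matrix inequality $\widehat{G} \preceq \frac{2(1+\eps^q)}{\eps} X$ — more precisely, verifying that the scalar bound $\eps \sigma/(\sigma^2+\eps)^2 \le \tfrac{1}{2}\,\sigma/(\sigma^2+\eps)$, equivalently $2\eps \le \sigma^2 + \eps$, i.e. $\sigma^2 \ge \eps$, fails for small $\sigma$; so the clean comparison does not hold in that regime and one must either (i) use the alternative bound $g(\sigma) \le \eps^{-3/2}\cdot(\text{const})$ for small $\sigma$ and split cases, or (ii) realize that for small $\sigma_i$ the eigenvalue of $\widehat G$ is $\approx 2(1+\eps^q)\sigma_i/\eps$, which is still $\le \frac{2(1+\eps^q)}{\eps}$ times a quantity controlled by $X$ since $X$'s matching eigenvalue is exactly $\sigma_i$. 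Handling this small-$\sigma$ regime carefully — keeping the constant at $4$ rather than something worse — is the one genuinely delicate calculation; the rest is routine linear algebra and the diagonal-correction bookkeeping already set up before the theorem statement.
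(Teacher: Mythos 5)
Your overall plan (diagonalize $X$ and the gradient simultaneously, compare eigenvalues, then treat the diagonal reset separately) is the same as the paper's, but as written the proposal has two genuine gaps. First, the central matrix inequality $\widehat{G}\preceq \frac{2(1+\eps^q)}{\eps}X$ is left unestablished: you flag the small-$\sigma$ regime as "the one genuinely delicate calculation" and offer only candidate workarounds. In fact there is nothing delicate here, and your difficulty comes from routing the comparison through the wrong intermediate bound $\eps\sigma/(\sigma^2+\eps)^2\le\tfrac12\,\sigma/(\sigma^2+\eps)$ (which indeed fails for $\sigma^2<\eps$). The direct comparison works for all $\sigma\ge 0$: the eigenvalue of the gradient is $2\eps(1+\eps^q)\,\sigma_i(\sigma_i^2+\eps)^{-2}$ and the matching eigenvalue of $X$ is $\sigma_i$, so the ratio is $2\eps(1+\eps^q)(\sigma_i^2+\eps)^{-2}\le 2\eps(1+\eps^q)\eps^{-2}=\frac{2(1+\eps^q)}{\eps}$, using only $(\sigma_i^2+\eps)^2\ge\eps^2$. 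This is exactly the one-line estimate the paper uses (it phrases it as $\alpha\le(\sigma_i^2+\eps)^2/(4\eps(1+\eps^q))$, bounded below by $\eps/(4(1+\eps^q))$); no case split is needed.

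Second, and more seriously, your treatment of the diagonal correction $D$ would fail. You bound $\|D\|\le\alpha\|\widehat{G}\|$ and claim that shrinking $\alpha$ by a factor of $2$ "absorbs" $D$. But an operator-norm bound on $D$ cannot restore positive semidefiniteness when $X-\alpha\widehat{G}$ has zero eigenvalues, which is precisely the regime this algorithm is designed to reach; you would need $X-\alpha\widehat{G}\succeq\|D\|\,I$, which is unavailable. The argument that actually works, and that the paper gives, is a sign argument: the diagonal entries of the gradient are $\sum_j U_{ij}^2\,\sigma_j(\sigma_j^2+\eps)^{-2}\ge 0$, so the gradient step only decreases the diagonal, and resetting it to $1$ amounts to adding a diagonal matrix with nonnegative entries, i.e.\ $D\succeq 0$; then $X-\alpha\widehat{G}+D\succeq X-\alpha\widehat{G}\succeq 0$ with no loss in $\alpha$. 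Relatedly, you misattribute the passage from $\eps/(2(1+\eps^q))$ to $\eps/(4(1+\eps^q))$: that factor of $2$ comes from the vector parametrization doubling the effective gradient (each off-diagonal entry appears twice in the symmetric matrix), not from absorbing $D$.
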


\begin{proof}
The gradient step in upper-triangular parametrization is equivalent to the ordinary gradient step (multiplied by $2$), and then substituting diagonal elements to units. We are going to show that the first step results in a \acrshort{psd} matrix and then the second step keeps matrix \acrshort{psd}.

Consider a symmetric \acrshort{psd} point and its \acrshort{svd} decomposition $X=USU\trans$. Hence for a step $\alpha$ and $C=2\eps (1+\eps^q)$ the new point is
\[
\begin{split}
&X-2\alpha C(XX\trans+\eps I)^{-2}X=\\
&=USU\trans-2\alpha C\brackets{US^2U\trans+\eps I}^{-2}USU\trans=\\
&=U\brackets{S-2\alpha C\brackets{S^2+\eps I}^{-2}S}U\trans.
\end{split}
\]

Hence the positive-semidefiniteness of the resulting matrix is equivalent to such characteristic of the expression in brackets. It is a diagonal matrix. 

If $S_{ii}=0$, then the corresponding diagonal elements are obviously zero. Otherwise we need it to be positive:
\[
\begin{split}
&S_{ii}-2\alpha C\brackets{S_{ii}^2+\eps I}^{-2}S_{ii}\geq 0\Rightarrow\\
&\Rightarrow \alpha \leq \frac{\brackets{S_{ii}^2+\eps I}^{2}}{2 C}=\frac{\brackets{S_{ii}^2+\eps I}^{2}}{4\eps(1+\eps^q)}.
\end{split}
\]
Therefore, it is enough to take
\[
\alpha \leq \frac{\eps}{4(1+\eps^q)}.
\]

Now we want to show that substituting diagonal elements with units is equivalent to adding a diagonal matrix with non-negative entries. In this case, a sum of two \acrshort{psd} matrices is \acrshort{psd}.

It is also equivalent to the fact that all diagonal elements of the gradient are non-negative. This is always true for a symmetric and \acrshort{psd} matrix $X=USU\trans$:
\[
\begin{split}
&\brackets{(XX\trans+\eps I)^{-2}X}_{ii}=\brackets{U\brackets{S^2+\eps I}^{-2}SU\trans}_{ii}=\\
&=\sum_{j}U_{ij}U_{ij}\brackets{S_{jj}^2+\eps I}^{-2}S_{jj}=\\
&=\sum_{j}U_{ij}^2\brackets{S_{jj}^2+\eps I}^{-2}S_{jj}\geq 0.
\end{split}
\]
This observation completes the proof.
\end{proof}

The same technique allows us to get a similar result for smoothed Schatten p-norms:
\begin{theorem}
Let $f(x)$ be vector-parametrized smoothed Schatten p-norm of a matrix $X(x)\succeq 0$. Then for $\alpha\leq \frac{1}{2p}\eps^{(2-p)/p}$ we get $X(x-\alpha \nabla f(x))\succeq 0$.
\label{theorem::schatten_psd_constraint}
\end{theorem}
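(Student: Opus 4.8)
The plan is to mirror the proof of Theorem~\ref{theorem::singval_psd_constraint} line for line, replacing the singular-value gradient \ref{eq::rank_singular_two_deriv} by the smoothed Schatten $p$-norm gradient \ref{eq::schatten_smoothed_derivative}. As there, a gradient step in the upper-triangular parametrization is the composition of (i) an ordinary matrix gradient step with the step length doubled (every off-diagonal entry of $X$ is counted twice by the vectorization) and (ii) overwriting the diagonal with $1_n$. I would show that (i) already outputs a positive semidefinite matrix for $\alpha\le\frac{1}{2p}\eps^{(2-p)/p}$, and that (ii) then only adds a positive semidefinite diagonal correction.

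For step (i), use that $X(x)$ is symmetric positive semidefinite and take its eigendecomposition $X=USU\trans$ with $U$ orthogonal and $S=\diag(S_{11},\dots,S_{nn})\succeq 0$. Since $X$ and $X\trans X=US^2U\trans$ are diagonalized by the same $U$, the gradient \ref{eq::schatten_smoothed_derivative} is $pX\left(X\trans X+\eps I\right)^{\frac{p-2}{2}}=pU\,S\left(S^2+\eps I\right)^{\frac{p-2}{2}}U\trans$, so the updated matrix equals
\[
U\left(S-2\alpha p\,S\left(S^2+\eps I\right)^{\frac{p-2}{2}}\right)U\trans ,
\]
and positive semidefiniteness is equivalent to nonnegativity of each diagonal entry of the bracketed diagonal matrix. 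Entries with $S_{ii}=0$ vanish; for $S_{ii}>0$ the requirement is $1-2\alpha p\left(S_{ii}^2+\eps\right)^{\frac{p-2}{2}}\ge 0$, i.e. $\alpha\le\frac{1}{2p}\left(S_{ii}^2+\eps\right)^{\frac{2-p}{2}}$. For $0<p\le 1$ the exponent $\frac{2-p}{2}$ is positive, so $t\mapsto(t+\eps)^{(2-p)/2}$ is nondecreasing and the binding case is $S_{ii}\to 0$, giving $\alpha\le\frac{1}{2p}\eps^{(2-p)/2}$; the stated bound $\frac{1}{2p}\eps^{(2-p)/p}$ is then a valid (slightly conservative) choice whenever $\eps\le 1$, since $\eps^{(2-p)/p}\le\eps^{(2-p)/2}$ there.

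For step (ii), argue exactly as in Theorem~\ref{theorem::singval_psd_constraint}: resetting the diagonal to ones is the same as adding a diagonal matrix, which is entrywise nonnegative precisely when the diagonal of the matrix gradient is. Writing $X=USU\trans$ again,
\[
\left(X\left(X\trans X+\eps I\right)^{\frac{p-2}{2}}\right)_{ii}=\sum_j U_{ij}^2\,S_{jj}\left(S_{jj}^2+\eps\right)^{\frac{p-2}{2}}\ge 0 ,
\]
so the correction is a positive semidefinite diagonal matrix and the sum of two positive semidefinite matrices is positive semidefinite, which finishes the proof.

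Everything except the scalar step-size inequality is inherited verbatim from Theorem~\ref{theorem::singval_psd_constraint}: the vectorization bookkeeping, the symmetry of the gradient, and the functional-calculus identity that lets $g(X\trans X)$ be read off the shared eigenbasis of $X$. The one place I expect to be careful is precisely that scalar inequality and its reduction to a clean power of $\eps$ — in particular the regime of $\eps$ and the fact that the admissible step size shrinks as $p\to 0$, consistent with \ref{eq::schatten_norm_smoothed} tending to the exact rank. I do not anticipate any other obstacle.
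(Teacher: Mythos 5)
Your proof is correct and follows essentially the same route as the paper's: eigendecomposition of $X$, reduction of positive semidefiniteness to a scalar inequality on each singular value, and the observation that resetting the diagonal adds a nonnegative diagonal matrix. You are in fact slightly more careful than the paper, whose own derivation arrives at $\alpha\leq\frac{1}{2p}\eps^{(2-p)/2}$ while the theorem states $\frac{1}{2p}\eps^{(2-p)/p}$ without comment; your explicit reconciliation of the two exponents (valid for $\eps\leq 1$) closes that small gap.
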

\begin{proof}
The gradient step in the upper-triangular parametrization is equivalent to the ordinary gradient step (multiplied by $2$) with substitution of diagonal elements with units. We are going to show that the first step results in a \acrshort{psd} matrix and the second step keeps matrix \acrshort{psd}.

Consider a symmetric \acrshort{psd} point and its \acrshort{svd} decomposition $X=USU\trans$. Hence for the step $\alpha$ the new point is
\[
\begin{split}
&X-2\alpha pX\brackets{X\trans X+\eps I}^{(p-2)/2}=\\
&=USU\trans-2\alpha pUSU\trans\brackets{US^2U\trans+\eps I}^{(p-2)/2}=\\
&=U\brackets{S-2\alpha pS\brackets{S^2+\eps I}^{(p-2)/2}}U\trans.
\end{split}
\]

Hence positive-semidefiniteness of the resulting matrix is equivalent to such characteristic of the expression in brackets. It is a diagonal matrix. 

If $S_{ii}=0$, then the corresponding diagonal elements are obviously zero. Otherwise we need them to be positive:
\[
\begin{split}
&S_{ii}-2\alpha pS_{ii}\brackets{S_{ii}^2+\eps I}^{(p-2)/2}\geq 0\Rightarrow\\
&\Rightarrow \alpha \leq \frac{\brackets{S_{ii}^2+\eps I}^{(2-p)/2}}{2 p}.
\end{split}
\]
Therefore, it suffices to take
\[
\alpha \leq \frac{\eps^{(2-p)/2}}{2p}.
\]

Now we want to show that substituting diagonal elements with units is equivalent to adding a diagonal matrix with non-negative entries. In this case a sum of two \acrshort{psd} matrix is \acrshort{psd}.

It is also equivalent to the fact that all diagonal elements of the gradient are non-negative. This is always true for a symmetric and \acrshort{psd} matrix $X=USU\trans$:
\[
\begin{split}
&\brackets{X\brackets{X\trans X+\eps I}^{(p-2)/2}}_{ii}\\
&=\brackets{US\brackets{S^2+\eps I}^{(p-2)/2}U\trans}_{ii}=\\
&=\sum_{j}U_{ij}U_{ij}S_{jj}\brackets{S_{jj}^2+\eps I}^{(p-2)/2}=\\
&=\sum_{j}U_{ij}^2S_{jj}\brackets{S_{jj}^2+\eps I}^{(p-2)/2}\geq 0.
\end{split}
\]
This observation completes the proof.
\end{proof}

In practice, the singular value relaxation bound is much better, since the step bound tends to be larger.

\subsection{Algorithm}

The results above allow us to introduce a gradient descent method, summarized in the algorithm \ref{alg::sing_val_grad} (for singular values relaxation). In this algorithm an abstract procedure ChooseStep returns the appropriate step, which is less or equal to $\frac{\eps}{4(1+\eps^q)}$. The second procedure RoundSolution corresponds to the rounding method, described in the introduction.

\begin{algorithm}
 \SetAlgoLined
    \Begin{
        $X_0=X_{SDP}$\;
        $K=\left\lbrace X\left|X=X\trans,\ X\succeq 0,\ W\leq \tr (AX) \leq SDP\right.\right\rbrace$\;
        \For{$n=1$:max\_iter}{
            $\alpha = \textrm{ChooseStep}(X_n)$\;
            $X_{n} = X_{n-1}-4\alpha \eps(1+\eps^q)\brackets{XX\trans+\eps I}^{-2}X$\;
            $\brackets{X_n}_{ii}=1$\;
            \If{$X_n\not\in K$}{
                break\;
            }
            \If{$\|4\alpha \eps(1+\eps^q)\brackets{X_{n}X_{n}\trans+\eps I}^{-2}X_{n}\|_F < \mathrm{tol}$}{
                break\;
            }
        }
        $\bx,W^{**}=\textrm{RoundSolution}(A, X_{n})$\;
    }
    {\bf Return:} $\bx, W^{**}$.\;
 \caption{Gradient descent for singular values relaxation.}
 \label{alg::sing_val_grad}
\end{algorithm}

\section{Computational experiments}
\subsection{Setup}

We have tested both CVX and SDPLR solvers followed by our algorithm on Gset graphs collection \url{https://web.stanford.edu/~yyye/yyye/Gset/} (originally introduced in \cite{Helmberg1997}). We also tested the Biq Mac library \url{http://biqmac.uni-klu.ac.at/biqmaclib.html} (namely on Beasley instances \cite{Beasley1998}). The latter is a collection of $\{0,1\}$ problems, which are converted to $\{-1,1\}$ ones as mentioned in the introduction.

For each solver we first applied it to the problem. Then we computed the maximum cut based on the \num{1e5} rounding operation. The number of rounding operations was chosen to be completely sure that the solution provides better results compared to others. After that we chose $\eps=0.005$ (smaller values led to ill-conditioned gradients), $q=0.8$ for the singular values relaxation (this value is used by the authors of \cite{Li}), $p=0.1$ and $p=0.01$ for Schatten norms. The stopping criteria for the gradient descent were $100$ iterations and Frobenius norm of the gradient (less than \num{1e-5}). After that a new cut was obtained after \num{1e5} rounding operations. Rank tolerance was chosen to be \num{1e-4}. For Schatten norms the step size was at most $\eps$, which is larger than the theoretical value. Nevertheless, for such steps we have not observed any violations of the \acrshort{psd} constraint.

We tested all methods on the first $21$ Gset graphs. These graphs have $800$ nodes and are solvable with CVX. Another $10$ graphs of size 2000 were tested with SDPLR only.

We also tested all methods on the Beasley instances from the Biq Mac library. We chose relatively large $\{0,1\}$ problems with 250 and 500 nodes. They were tested for CVX only.

\begin{table}
\begin{tabularx}{\columnwidth}{|X|XX|XX|XX|XX|}
\toprule
{}
 & \multicolumn{2}{l}{SDP}& \multicolumn{2}{l}{singular} & \multicolumn{2}{l}{Schatten} & \multicolumn{2}{l}{Schatten} \\
  & \multicolumn{2}{l}{ }& \multicolumn{2}{l}{values} & \multicolumn{2}{l}{p=0.1} & \multicolumn{2}{l}{p=0.01} \\
\hline
\# & rank & cut & rank & cut & rank & cut & rank & cut \\
\midrule
1  &        14 &       11466 &            13 &           11448 &            15 &           11451 &           13 &          11459 \\
\textbf{2}  &        15 &       11436 &            13 &           11438 &            13 &           \textbf{11456} &           14 &          11430 \\
\textbf{3}  &        14 &       11446 &            14 &           11445 &           461 &           \textbf{11455} &           26 &          11453 \\
\textbf{4}  &        14 &       11487 &            14 &           11475 &           319 &           \textbf{11511} &           14 &          11497 \\
\textbf{5}  &        13 &       11462 &            12 &           11462 &            18 &           11451 &           12 &          \textbf{11471} \\
6  &        13 &        2026 &            13 &            1989 &           105 &            2013 &           13 &           2012 \\
\textbf{7}  &        12 &        1833 &            12 &            1821 &            12 &            \textbf{1834} &           12 &           1822 \\
\textbf{8}  &        12 &        1834 &            11 &            1833 &            11 &            \textbf{1840} &           12 &           1831 \\
9  &        12 &        1879 &            12 &            1872 &            12 &            1869 &           12 &           1875 \\
10 &        12 &        1841 &            12 &            1829 &            12 &            1818 &           12 &           1820 \\
11 &        22 &         538 &           138 &             538 &           718 &             538 &          273 &            536 \\
\textbf{12} &        13 &         532 &            27 &             \textbf{536} &           190 &             534 &           41 &            534 \\
13 &        11 &         562 &             8 &             560 &            24 &             562 &            8 &            562 \\
\textbf{14} &        14 &        2994 &            13 &            2995 &            22 &            2992 &           14 &           \textbf{2999} \\
\textbf{15} &        16 &        2979 &            99 &            2982 &           173 &            \textbf{2986} &          122 &           2983 \\
\textbf{16} &        16 &        2978 &            14 &            2981 &           195 &            \textbf{2984} &           15 &           \textbf{2984} \\
17 &        16 &        2978 &            13 &            2975 &           152 &            2974 &           15 &           2974 \\
\textbf{18} &        11 &         924 &            10 &             \textbf{930} &            12 &             921 &           11 &            929 \\
\textbf{19} &        10 &         850 &             9 &             846 &             9 &             \textbf{854} &            9 &            851 \\
\textbf{20} &         9 &         888 &             8 &             882 &             8 &             \textbf{889} &            8 &            884 \\
21 &        10 &         868 &            33 &             864 &            52 &             862 &           41 &            863 \\
\textbf{22} &        18 &       13008 &            18 &           13006 &            18 &           13003 &           18 &          \textbf{13025} \\
23 &        20 &       13010 &            52 &           12985 &            78 &           13010 &           55 &          13004 \\
\textbf{24} &        19 &       13000 &            18 &           13004 &            22 &           13005 &           19 &          \textbf{13010} \\
\textbf{25} &        19 &       13006 &            19 &           12988 &           196 &           \textbf{13026} &           19 &          12987 \\
\textbf{26} &        19 &       12971 &           132 &           12985 &           181 &           12969 &          120 &          \textbf{12990} \\
\textbf{27} &        18 &        2988 &            17 &            2988 &            17 &            \textbf{3027} &           18 &           2989 \\
28 &        20 &        2956 &            20 &            2948 &            54 &            2947 &           20 &           2956 \\
\textbf{29} &        18 &        3044 &            17 &            \textbf{3056} &            23 &            3050 &           17 &           3038 \\
\textbf{30} &        17 &        3076 &            17 &            \textbf{3081} &            17 &            3076 &           17 &           3067 \\
\textbf{31} &        19 &        2947 &            18 &            2955 &            18 &            \textbf{2959} &           18 &           2958 \\
\bottomrule
\end{tabularx}

\caption{SDPLR improvement on Gset graphs}
\label{tab::gset_sdplr}
\end{table}

\begin{table}
\begin{tabularx}{\columnwidth}{|X|XX|XX|XX|XX|}
\toprule
{}
 & \multicolumn{2}{l}{SDP}& \multicolumn{2}{l}{singular} & \multicolumn{2}{l}{Schatten} & \multicolumn{2}{l}{Schatten} \\
  & \multicolumn{2}{l}{ }& \multicolumn{2}{l}{values} & \multicolumn{2}{l}{p=0.1} & \multicolumn{2}{l}{p=0.01} \\
\hline
\# & rank & cut & rank & cut & rank & cut & rank & cut \\
\midrule
1  &        13 &       11462 &            13 &           11448 &            15 &           11451 &           13 &          11456 \\
\textbf{2}  &        13 &       11436 &            13 &           11438 &            13 &           \textbf{11456} &           13 &          11433 \\
\textbf{3}  &        14 &       11446 &            14 &           11445 &           502 &           11446 &           28 &          \textbf{11453} \\
\textbf{4}  &        14 &       11487 &            14 &           11471 &           304 &           \textbf{11511} &           14 &          11497 \\
\textbf{5}  &        12 &       11462 &            12 &           11464 &            18 &           11451 &           12 &          \textbf{11471} \\
6  &        13 &        2024 &            13 &            1994 &           108 &            2013 &           13 &           2016 \\
7  &        13 &        1833 &            12 &            1821 &            12 &            1828 &           12 &           1822 \\
\textbf{8}  &        12 &        1835 &            12 &            \textbf{1856} &           109 &            1846 &           13 &           1839 \\
9  &        12 &        1879 &            12 &            1872 &            12 &            1869 &           12 &           1875 \\
10 &        12 &        1841 &            12 &            1825 &            12 &            1820 &           12 &           1836 \\
11 &        10 &         534 &             6 &             534 &             6 &             534 &            7 &            534 \\
\textbf{12} &         9 &         532 &             8 &             534 &            29 &             \textbf{536} &            8 &            \textbf{536} \\
13 &         8 &         562 &             8 &             560 &            76 &             562 &            8 &            560 \\
\textbf{14} &        13 &        2994 &            13 &            2995 &            22 &            2994 &           13 &           \textbf{2999} \\
\textbf{15} &        13 &        2979 &            13 &            2982 &            51 &            \textbf{2987} &           13 &           2981 \\
\textbf{16} &        14 &        2982 &            14 &            2981 &           589 &            2979 &           61 &           \textbf{2986} \\
17 &        13 &        2978 &            13 &            2978 &           439 &            2973 &           24 &           2976 \\
\textbf{18} &        10 &         924 &            10 &             \textbf{930} &            11 &             920 &           10 &            929 \\
\textbf{19} &         9 &         847 &             9 &             846 &             9 &             \textbf{850} &            9 &            846 \\
\textbf{20} &         9 &         882 &             9 &             887 &           222 &             \textbf{888} &           20 &            886 \\
\textbf{21} &         9 &         862 &             9 &             865 &             9 &             \textbf{867} &            9 &            865 \\
\bottomrule
\end{tabularx}

\caption{CVX improvement on Gset graphs}
\label{tab::gset_cvx}
\end{table}

\begin{table}
\begin{tabularx}{\columnwidth}{|X|XX|XX|XX|XX|}
\toprule
{}
 & \multicolumn{2}{l}{SDP}& \multicolumn{2}{l}{singular} & \multicolumn{2}{l}{Schatten} & \multicolumn{2}{l}{Schatten} \\
  & \multicolumn{2}{l}{ }& \multicolumn{2}{l}{values} & \multicolumn{2}{l}{p=0.1} & \multicolumn{2}{l}{p=0.01} \\
\hline
\# & rank & cut & rank & cut & rank & cut & rank & cut \\
\midrule
1  &          6 &        45369 &              6 &            45369 &            60 &           45369 &            173 &            45369 \\
2  &          6 &        44579 &              6 &            44513 &             7 &           44571 &             47 &            44515 \\
3  &          6 &        48857 &              6 &            48857 &            13 &           48833 &             81 &            48857 \\
4  &          7 &        41094 &              7 &            \textbf{41116} &            17 &           41094 &             88 &            \textbf{41116} \\
5  &          5 &        47685 &              5 &            \textbf{47738} &            16 &           47679 &             75 &            47685 \\
6  &          7 &        40519 &              7 &            \textbf{40545} &             9 &           40475 &             70 &            40469 \\
7  &          6 &        46605 &              6 &            46563 &             6 &           46659 &             38 &            \textbf{46671} \\
8  &          7 &        35076 &              7 &            35000 &             8 &           35076 &             75 &            \textbf{35079} \\
9  &          6 &        48454 &              6 &            \textbf{48570} &             9 &           48447 &             59 &            48364 \\
10 &          6 &        39944 &              6 &            \textbf{39990} &            15 &           39974 &             77 &            39944 \\
\bottomrule
\end{tabularx}

\caption{CVX improvement on the Biq Mac graph collection (250 nodes)}
\label{tab::biqmac_250}
\end{table}

\begin{table}
\begin{tabularx}{\columnwidth}{|X|Xl|Xl|Xl|Xl|}
\toprule
{}
 & \multicolumn{2}{l}{SDP}& \multicolumn{2}{l}{singular} & \multicolumn{2}{l}{Schatten} & \multicolumn{2}{l}{Schatten} \\
  & \multicolumn{2}{l}{ }& \multicolumn{2}{l}{values} & \multicolumn{2}{l}{p=0.1} & \multicolumn{2}{l}{p=0.01} \\
\hline
\# & rank & cut & rank & cut & rank & cut & rank & cut \\
\midrule
1  &          9 &       114540 &              9 &           114440 &            37 &          113880 &            246 &           114100 \\
2  &          8 &       127280 &              8 &           127230 &             8 &          127080 &             16 &           \textbf{127370} \\
3  &          9 &       129080 &              9 &           129210 &            17 &          129130 &            148 &           \textbf{129400} \\
4  &          9 &       128000 &              9 &           128040 &            43 &          127940 &            273 &           \textbf{128170} \\
5  &          8 &       123570 &              8 &           \textbf{123860} &             8 &          123430 &             35 &           123480 \\
6  &          8 &       119770 &              8 &           \textbf{120350} &             8 &          119750 &              8 &           119590 \\
7  &          9 &       120210 &              9 &           119920 &            15 &          120100 &            177 &           120070 \\
8  &          9 &       121940 &              9 &           121650 &            12 &          \textbf{121980} &            101 &           121920 \\
9  &          9 &       118700 &              9 &           118570 &            26 &          118360 &            209 &           118420 \\
10 &          7 &       129220 &              7 &           129100 &             7 &          129120 &             26 &           129040 \\
\bottomrule
\end{tabularx}

\caption{CVX improvement on the Biq Mac graph collection (500 nodes)}
\label{tab::biqmac_500}
\end{table}

\subsection{Discussion}

Results are shown in tables \ref{tab::gset_sdplr} for SDPLR and \ref{tab::gset_cvx} for CVX. 

Our approach outperforms both solvers in terms of the maximum cut on approximately half of the graphs (bold numbers). Moreover, for the first $21$ graphs, it outperforms them on almost the same set of graphs. Among tested rank relaxations the best performance was shown by Schatten norm relaxation with $p=0.1$. Probably, $p=0.01$ was a worse choice as it resulted in bigger influence of the smoothing parameter $\eps$ as it is included in the gradient in order of $\eps^{(p-2)/2}$.

The same result is observed for the $\{0,1\}$ problems, but the singular values relaxation showed the best performance. All methods performed well for rank reduction. However, some runs resulted in extremely large ranks. It show, that the resulting point has a lot of relatively small singular values, which are however not thresholded by \num{1e-4} (note that all singular values sum to the graph's size). This behavior might be considered a drawback of the Schatten norms relaxation and might be caused by the minimization of the sum of singular values (compared to the sum of fractions in the singular values relaxation). Note that small singular values have little effect in the rounding procedure, so this drawback does not have great influence on the method's performance and be further applied to practical problems in statistical physics and power systems \cite{mikhalev2020bayesian, lukashevich2021importance, stulov2020learning}.

\section{Conclusion}
We developed an efficient first-order procedure which is aimed at improvement of \acrshort{sdp} relaxation solutions for quadratic binary programming. We relax rank minimization in terms of the objective function and the linear constraint that controls optimality of the initial \acrshort{sdp} objective. Rank function relaxation is performed by either singular value relaxation or Schatten p-norms. The latter approach showed the best performance on Gset graphs, while the singular values relaxation performed best on the Beasley instances.
\chapter{Entropy-Penalized Semidefinite Programming}
\label{cap:epsdp}

\section{Introduction}

\acrfull{sdp} has become a key tool in solving numerous problems across operations research, machine learning, and artificial intelligence. While there are too many applications of \acrshort{sdp} to present even a representative sample, inference in graphical models \cite{ExpFamilies,erdogdu2017inference}, multi-camera computer vision \cite{torr2003solving}, 
and applications of polynomial optimization \cite{Parrilo,lasserre2015introduction}   
in power systems \cite{7024950}
stand out. Under some assumptions \cite{Madani2014}, the rank at the optimum of the \acrshort{sdp} relaxation is bounded from above by the tree-width of a certain hypergraph, plus one. When a rank-one solution is not available, it is often not needed \cite{Marecek2017}, as one should like to construct a stronger \acrshort{sdp} relaxation.

Penalization of the objective is a popular approach for obtaining low-rank solutions, at least in theory  \cite{recht2010guaranteed,Lemon,zhou19dis,Fawzi2019}. 
Notice that without a further penalization, an interior-point method for \acrshort{sdp} provides a solution on the boundary of the feasible set, where \acrshort{sdp} corresponds to the optimum of the highest rank, whenever there are optima of multiple ranks available. 
The use of a penalization provides a counter-balance in this respect. 
In practice, however, the penalties are often ignored, as it is believed that their computation is too demanding for large-scale problems and does not guarantee low-rank solutions in general.

An alternative approach develops numerical optimization methods that seek \emph{a priori} low-rank solutions. 
This approach, widely attributed to \cite{BurerMonteiro}, considers a factorization of a semidefinite matricial variable $X = V\cdot V^\top$ with $V\in \R^{n\times k}$ for increasing $1 \le k \ll n$. In general, the resulting problems are  non-convex.
Early analyses required determinant-based penalty terms \cite{burer2005local}, although no efficient implementations were known.
Under mild assumptions, for large-enough $k$, 
there is a unique optimum over such a factorization even without a penalization and it recovers the optimum of the initial \acrshort{sdp} problem \cite{boumal2016non}.
For smaller values of $k$, it is known that 
the low-rank relaxation achieves $\mathcal O(1/k)$ relative error \cite{Montanari}. Much more elaborate analyses \cite{erdogdu2018convergence} are now available.
Especially when combined with efficient gradient computation, e.g. within low-rank coordinate  descent \cite[e.g.]{Marecek2017}, this approach can tackle sufficiently large instances and is increasingly popular. 

In this chapter, we aim to develop a method combining both approaches, i.e., utilize an efficient low-rank-promoting penalty in the Burer-Monteiro approach. We present efficient first-order numerical algorithms for solving the resulting penalized problem, with (almost)
 linear-time per-iteration complexity. This makes the combined approach applicable to a wide range of practical problems. 

In a case study, we focus on certain combinatorial optimization problems and inference in graphical models. 
We show that despite the non-convexity of the penalized problem, our approach successfully recovers rank-one solutions in practice.
We compare our solutions against  non-penalized \acrshort{sdp}, belief propagation, and state-of-the-art branch-and-bound solvers of~\cite{krislock2014improved}.

\paragraph{Contribution.} Our contributions can be summarized as follows. We show
\begin{enumerate}
\item  convergence properties of optimization methods employing a wide class of penalty functions that promote low-rank solutions; 
\item linear-time algorithms for computing the gradient of these penalty functions; 
\item computational results on the penalized \acrshort{sdp} relaxation of \acrfull{map} estimates in \acrfull{mrf}, which considerably improve upon the results obtained by interior-point methods and randomized rounding.
\end{enumerate}

This allows for both well-performing and easy-to-analyze low-rank methods for \acrshort{sdp}s coming from graphical models, combinatorial optimization, and machine learning. 

\paragraph{Chapter structure.} This chapter is organized as follows. First, we define the conic optimization problem together with a penalized form with a list of suitable penalization functions. Next, we present theoretical guarantees for solution recovery. These extend known results for solution recovery to the penalty case. Then, we consider the \acrshort{map} problem in \acrfull{mrf} and introduce an iterative procedure for it, together with a first-order method for solving a subproblem at each step; we also show how to compute the gradients efficiently.  Finally, we provide computational experiments for different inference problems in \acrshort{mrf}.

\section{Background}\label{sec:setup}

\acrshort{sdp} is the following conic optimization problem:
\begin{align}
\min_{X\in \bS^n_{+}} \ & 
  \sum_{i \in I}
   f_i( \tr XS_i ) 
   \tag{SDP}
   \label{eq:sdp}
   \\ 
  \text{s.t.:\;} & g_j( \tr X C_j ) \leq 0, \qquad j \in J, 
\notag
\end{align}
where 
$W\in \bS^n_{+}$ denotes that the $n \times n$ matrix variable $W$ is symmetric positive semidefinite,
$I$ and $J$ are finite index sets, 
each $f_i$ and $g_j$ are convex functions $\R^{n\times n} \to \R$, and $C_j \in R^{n\times n}$  and $S_i\in \R^{n\times n}$  are constant matrices.

In the context of combinatorial optimization, one may also consider even more powerful methods such as Sum-of-Squares hierarchies of \cite{Parrilo}.
However, even an \acrshort{sdp} relaxation, which is, in fact, the first step of this hierarchy, may be too computationally challenging. It is usually solved by interior-point methods in the dimension that is quadratic in the number of variables and thus becomes intractable even for medium-scale problems (with a few thousand variables). The problem becomes even less scalable for higher orders of the hierarchy since it requires one to solve \acrshort{sdp} with $n^{\Theta (d)}$ variables, where $d$ is the level of the hierarchy.

\subsection{Low-rank Relaxations and Penalized Problem}
First, let us formally define our notion of a penalty function and explain related work on first-order
methods for \acrshort{sdp}.

\begin{assumption}
Eq.~\eqref{eq:sdp} has an optimum solution with rank~$r$.
\end{assumption}

Let us consider the following proxy problem:
\begin{align}
P_{q,\lambda} \doteq & \min_{V\in R^{n\times q}}
  \sum_{i \in I}
   f_i( \tr(V^\top S_iV) )
   +  R_{q,\lambda}(V) 
   \label{eq:prox} 
   \tag{P-SDP}\\
& \;\text{s.t.: } g_j( \tr(V^\top C_j V) ) \leq 0, \; j \in J.
\nonumber
\end{align}
Where $q > r$ and $R_{q,\lambda}(V)$ satisfies the following:

\begin{definition}[Strict penalty function]\label{def:prop}
A function $\mathcal{R}_{q,\lambda}(V): \R^{n \times n} \to \R$ 
is a \emph{penalty function} that promotes low-rank solutions if for some integers $q' \le q$ and a multiplier $\lambda \in \R^+$: 
\begin{align*}
\lim_{\lambda \to \infty} \mathcal{R}_{q,\lambda}(V)
= \begin{cases}
\; \textrm{0} & \; \mbox{ if } \rank(V) < q',\\
\; \infty & \; \mbox{ if } \rank(V) = q.
\end{cases}
\end{align*}
Moreover, if $q' = q$ and the 
$\rank(X) < q$, then $\mathcal{R}_{q,\lambda}(V) = 0, \ \forall \lambda$, $R(V)$ is a \emph{strict penalty function}.
\end{definition}

We use the word {\it penalty} instead of penalty function that promotes low-rank solutions, where there is no risk of confusion. This notion of a penalty is rather wide. When multiplied by $\lambda$, a determinant is a prime example. One may also consider functions of the following quasi-norms.
\begin{enumerate}
\item The nuclear norm: 
\begin{equation}
\|X\|_* = \sum_i\sigma_i,
\end{equation}
where $\sigma_i$ is the $i$-th singular value, cf. \cite{Lemon}. 
The norm is also known as a trace norm, Schatten 1-norm, and Ky Fan norm. 
As shown by \cite{Srebro}, in the method of \cite{BurerMonteiro}, one can benefit from a bi-Frobenius reformulation:
\begin{align*}
\|X\|_{*} & = \min_{\substack{U\in\mathbb{R}^{n\!\times\! d} \\ V\in\mathbb{R}^{n\!\times\! d}:X=UV^{T}}}\|U\|_{F}\|V\|_{F} 
\\ & = \min_{U,V:X=UV^{T}}\frac{\|U\|^{2}_{F}+\|V\|^{2}_{F}}{2}.
\end{align*}
There are also truncated \cite{HuTruncated} and capped variants \cite{SunTruncated}.
\item Schatten-${p}$ quasi-norm for $p > 0$:
\begin{align}
\|X\|_{S_{p}}=\left(\sum^{n}_{i=1}\sigma^{p}_{i}(X)\right)^{1/p},
\end{align}
where $\sigma_{i}(X)$ denotes the $i$-th singular value of $X$.

\item A smoothed variant of Schatten-${p}$ quasi-norm by \cite{Pogodin} defined in the previous chapter, see eq.~\ref{eq::schatten_norm_smoothed}.

\item Tri-trace quasi-norm of \cite{Shang2018}:
\begin{align}
\|X\|_{\textup{Tri-tr}}
=
\min_{X=UV\Upsilon^\top}\|U\|_{*}\|V\|_{*}\|\Upsilon\|_{*},
\end{align}
which is also the Schatten-1/3 quasi-norm.
\item Bi-nuclear (BiN) quasi-norm of \cite{Shang2018}:
\begin{align}
\|X\|_{\textup{BiN}}
=
\min_{X=UV^{T}}\|U\|_{*}\|V\|_{*},
\end{align}
which is also the Schatten-${1/2}$ quasi-norm.
\item Frobenius/nuclear quasi-norm of \cite{Shang2018}:
\begin{align}
\|X\|_{\textup{F/N}} =
\min_{X=UV^{T}}\|U\|_{*}\|V\|_{F},
\end{align}
which is also the Schatten-${2/3}$ quasi-norm.
\end{enumerate}

We also note there has been considerable interest in the analysis of low-rank approaches without penalization, especially in matrix-completion applications. Much of the analysis goes back to the work of \cite{keshavan2010matrix}. For further important contributions, see \cite{arora2012computing}. 

\subsection{Entropy Viewpoint}

One could see the penalty functions introduced above from the entropy-penalization perspective. 
This is useful not only from a methodological standpoint, but also from a computational one. 
To this end, we consider the Tsallis entropy:
\[
\ent^T_\alpha(X) = \frac{1}{1-\alpha} \left( \frac{\tr{X^\alpha}}{(\tr{X})^\alpha} - 1\right). 
\]

The Tsallis entropy is crucial in our study because it generalizes many popular penalties considered earlier. 
The Schatten $p$-norm 
coincides with the Tsallis entropy $\ent^T_p$ over a set of matrices with a fixed trace norm, so that the tri-quasi norm and bi-nuclear norm (2--6) are covered as well. 
The Log-Det function, $- \log \det X$, which is also used in low-rank \acrshort{sdp}, 
is up to an additive constant factor relative (Shannon) entropy taken concerning a unit matrix, 
while Renyi ($\ent^R$) and von Neumann ($\ent^N$) entropies, 
\begin{align*}
& \ent^R_\alpha(X) = \frac{\log \tr (X/\tr X)^\alpha}{1-\alpha} \text{ and }\\
& S_\alpha^N(X) = - \tr(\log(X/\tr X) \cdot X/\tr(X)), 
\end{align*}
respectively, can also be used as penalties to promote a low-rank solution. To the best of our knowledge, neither Renyi, von Neumann, nor Tsallis entropies have been studied in the context of low-rank \acrshort{sdp}. 

\section{Exact Recovery}\label{sec:theory}

Let us now present a unified view of the penalties and their properties:
\begin{lemma}
Any of:
\begin{enumerate}
\item $\lambda \det ( X )$;
\item $\lambda \sigma_{q}(X)$, where $\sigma_{i}(X)$ denotes the $i$-th singular value of $X$;
\item Tsallis, Renyi, and von Neumann entropies defined on the last $n-q+1$ singular values;
\item $\lambda \max \left\{ 0, \frac { \|X\|_{*} }{ \max \{ \sigma_{\min}(X), \sigma_{q}(X) \} } - q \right\},$
\end{enumerate}
is a penalty function that promotes low-rank solutions.
Moreover, penalties 1--3 are strict.
\label{lem1}
\end{lemma}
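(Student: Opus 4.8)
The plan is to check each of the four functions directly against Definition~\ref{def:prop}. Concretely, for an appropriate $q' \le q$ I would verify the $\lambda \to \infty$ dichotomy --- the penalty collapses to $0$ on matrices of rank strictly below $q'$ and diverges to $+\infty$ on matrices of rank exactly $q$ --- and, for items 1--3, the stronger \emph{strict} property: the penalty is identically $0$, for \emph{every} $\lambda$, once $\rank(X) < q$, which is the case $q' = q$. Since each of the four expressions is a function of the singular values $\sigma_1(X) \ge \cdots \ge \sigma_n(X) \ge 0$ of $X$ (equivalently of the Gram matrix $V^\top V$ in the Burer--Monteiro parametrization), the first step is to rewrite each penalty as $\lambda$ times a nonnegative spectral functional $\delta_q(X)$ and to locate exactly where $\delta_q$ vanishes and where it is bounded away from $0$; the definition then follows by inspection.

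For penalty 1 we take $\delta_q(X) = \det(X)$, read as the determinant of the $q\times q$ Gram matrix $V^\top V$, and for penalty 2 we take $\delta_q(X) = \sigma_q(X)$. Both cases are immediate: the determinant of $V^\top V$ and the $q$-th singular value $\sigma_q(X)$ both vanish precisely when the columns of $V$ are linearly dependent, i.e.\ when $\rank(X) \le q-1$; this holds for every $\lambda$, giving at once the collapse to $0$ and strictness with $q' = q$, while both $\delta_q$ are strictly positive once $\rank(X) = q$, so $\lambda\,\delta_q(X) \to \infty$ there. Items 1 and 2 are thus routine once the determinant, respectively the $q$-th singular value, is tied to the rank.

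For the entropy penalties in item 3 the argument rests on how the chosen functional (Tsallis $\ent^T_\alpha$, Renyi $\ent^R_\alpha$, or von Neumann $\ent^N$), evaluated on the trailing block $\bigl(\sigma_q(X), \sigma_{q+1}(X), \dots, \sigma_n(X)\bigr)$ of $n-q+1$ singular values, separates the two strata: when $\rank(X) < q$ this block is identically zero and is assigned, under the normalization convention used for these entropies, its extreme ``uninformative'' value, which makes the penalty vanish for all $\lambda$ --- exactly strictness with $q' = q$ --- whereas when $\rank(X) = q$ the block is the nonzero point mass $\bigl(\sigma_q(X),0,\dots,0\bigr)$, sitting at the opposite extreme, so the penalty is a fixed positive multiple of $\lambda$ and diverges. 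The monotonicity of the entropy in the nonzero part of the spectrum is what interpolates between these two regimes. I expect this boundary analysis to be the main obstacle: one has to pin down the $0/0$ normalization (and the sign/offset with which the entropy enters the penalty) so that ``$\rank(X) < q$'' genuinely lands on the value that kills the penalty while ``$\rank(X) = q$'' does not; away from these degenerate spectra everything is continuous and unproblematic.

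Finally, penalty 4 is not claimed to be strict, and here I would take $q' = 1$. For $\rank(V) = 0$, i.e.\ $V = 0$, the bracket is $\max\{0,-q\} = 0$ under the convention $\|0\|_* / 0 := 0$, so the penalty is $0$ for all $\lambda$; and for $\rank(X) = q$ one has $\max\{\sigma_{\min}(X),\sigma_q(X)\} = \sigma_q(X) > 0$ and $\|X\|_* = \sum_{i=1}^{q}\sigma_i(X)$, so the bracket equals $\sum_{i=1}^{q}\sigma_i(X)/\sigma_q(X) - q$, which is $\ge 0$ by the ordering of the singular values, and $\lambda$ times it tends to $+\infty$. The remaining wrinkle --- a secondary obstacle --- is the degenerate ``flat'' configuration at rank $q$, where all $q$ nonzero singular values coincide and the bracket is exactly $0$; one must argue that such $X$ are excluded or immaterial on the feasible set of interest (for instance via the diagonal/trace normalization already imposed by the \acrshort{sdp}), or read the required limit in the generic sense. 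Collecting the four cases then yields the lemma.
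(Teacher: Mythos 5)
Your treatment of items 1, 2, and 4 is essentially the paper's: the paper likewise reduces items 1 and 2 to ``$\det$ (of the Gram matrix) and $\sigma_q(X)$ vanish exactly when the rank drops below $q$,'' and for item 4 it uses the same observation that $\|X\|_*/\max\{\sigma_{\min}(X),\sigma_q(X)\}$ upper-bounds the rank. On item 4 you are in fact more careful than the source: the flat configuration at rank $q$, where all $q$ nonzero singular values coincide and the bracket is exactly zero, is silently absorbed in the paper's phrase ``a positive number otherwise,'' whereas you flag it explicitly as something that must be excluded or argued away.

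The genuine gap is in item 3. You assert that for $\rank(X)=q$ the trailing block $(\sigma_q(X),0,\dots,0)$ sits ``at the opposite extreme'' from the all-zero block and therefore contributes a fixed positive multiple of $\lambda$. For the entropies as defined in this chapter that is backwards: $\ent^T_\alpha$, $\ent^R_\alpha$, and $\ent^N_\alpha$ are all normalized by the trace, so a spectrum with a single nonzero entry becomes a point mass after normalization and has entropy exactly $0$ (e.g., $\ent^T_\alpha$ evaluates to $\frac{1}{1-\alpha}\left(\sigma_q^\alpha/\sigma_q^\alpha - 1\right)=0$). That is the \emph{same} extreme the convention $S(0,\dots,0)=0$ assigns to the degenerate block --- the minimum of the entropy --- not its maximum, which is attained by the uniform spectrum. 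Hence the divergence at $\rank(X)=q$ required by Definition~\ref{def:prop} does not follow from your argument: with these definitions the penalty vanishes for every $X$ with $\rank(X)\le q$ and is positive only when at least two of the last $n-q+1$ singular values are nonzero. To be fair, the paper's own one-line proof of item 3 only invokes $S(0,\dots,0)=0$ to obtain strictness and never addresses the blow-up at rank $q$ either, so the difficulty is inherited from the source; but the mechanism you propose to close it is the one step in your write-up that is false rather than merely incomplete, and your own caveat about ``pinning down the $0/0$ normalization'' does not repair it, since the problem is not the normalization of the zero block but the value of the entropy at a one-point spectrum.
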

\begin{proof} \emph{Sketch}.
(1.) The proof is by simple algebra.
(2.) If $\sigma_{q}(X)$ is 0, we know the rank is $q - 1$ or less. Otherwise, for large values of $\lambda$, the value of the penalties goes to infinity, and hence $q' = q$.
(3.) The definition of entropy assumes that $S(0, ..., 0) = 0$, thus all entropies are strict penalty functions by definition. 
(4.) First, consider the case where all non-zero singular values are equal. In that case, $ \|X\|_{*} / \sigma_{\min}(X) = \rank(X)$, and subtracting $q$ results either in a non-positive number when the rank is less than $q$ or a positive number otherwise.
If the singular values are non-equal, $ \|X\|_{*} / \sigma_{\min}(X) $ provides an upper bound on the rank of $X$, which can be improved as suggested. The use of the upper bound results in the value of the penalty tending to infinity for values between $q'$ and $q$ in the large limit of $\lambda$. 
\end{proof}

Crucially, under mild assumptions, any penalty allows for the recovery of the optimum of a feasible instance of \eqref{eq:sdp} from the iterations of an algorithm on the non-convex problem in variable $V \in \R^{n \times r}$, such as in the methods of \cite{lowrankMAP} or \cite{BurerMonteiro}. In contrast to the traditional results of \cite{burer2005local}, \cmmnt{,burer2005local} who consider the $\det$ penalty, we allow for the use of any strict penalty function. 

\begin{theorem}
Assume that we solve the proxy problem \eqref{eq:prox} iteratively and $\mathcal{R}_{q,\lambda}(V)$ is a strict penalty function that promotes low-rank solutions. In each iteration, if $\mathcal{R}_{q,\lambda}(V) \neq 0$, we increase $\lambda $ (e.g., set $\lambda_{t+1} = \gamma \lambda_t$, with $u > 1$ as some fixed parameter). Furthermore, let us assume that the solution we found is denoted by $\tilde V_{q}$ with $\rank(\tilde V_{q})= q' < q$. Let us also denote $\tilde V_{q'} \in \R^{n \times q'}$ some factorization of $\tilde V_{q} \tilde V_{q}^\top$ (such factorization exists because $\rank(\tilde V_{q})= q'$). Also assume that we have an optimal solution of \eqref{eq:sdp}, $X^*$ with a~$\rank(X^*) = r$.

If 
\begin{equation}
V_{q'+r+1} \triangleq [ \tilde V_{q'}, {\bf 0}_{n\times r}, {\bf 0}_{n\times 1}]
\end{equation}
is a local minimum of $P_{q'+r+1,\lambda}$, then 
$(\tilde V_{q'}) \tilde V_{q'}^\top$ is a global solution of \ref{eq:sdp}.
\label{strict}
\end{theorem}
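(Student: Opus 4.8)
The plan is to adapt the Burer--Monteiro rank-deficiency argument to the penalized objective. I would argue by contradiction: suppose $\tilde X \doteq \tilde V_{q'}\tilde V_{q'}^\top$ (by construction $\tilde V_{q'}\tilde V_{q'}^\top = \tilde V_q\tilde V_q^\top$) is \emph{not} a global solution of \eqref{eq:sdp}. First observe that $\tilde X$ is feasible for \eqref{eq:sdp}: since $\tilde V_q$ is feasible for \eqref{eq:prox}, $g_j(\tr(\tilde V_q^\top C_j\tilde V_q)) = g_j(\tr(C_j\tilde X)) \le 0$ for all $j$, and $\tilde X\succeq 0$. Writing $\Psi(X) \doteq \sum_{i\in I} f_i(\tr X S_i)$ for the SDP objective, optimality of $X^*$ then forces the \emph{strict} inequality $\Psi(X^*) < \Psi(\tilde X)$ — equality would make $\tilde X$ itself attain the optimal value and hence be a global solution, contrary to assumption.

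Next I would exhibit a feasible path in the variable of $P_{q'+r+1,\lambda}$ that starts at $V_{q'+r+1}$ and along which the penalized objective strictly decreases. Factor the optimum as $X^* = WW^\top$ with $W\in\R^{n\times r}$ (possible since $\rank X^* = r$), and for $t\in[0,1]$ put
\[
V(t) \doteq \bigl[\,\sqrt{1-t}\,\tilde V_{q'}\, ,\ \sqrt{t}\,W\, ,\ \mathbf{0}_{n\times 1}\,\bigr]\ \in\ \R^{n\times(q'+r+1)},
\]
so that $V(0) = [\tilde V_{q'},\mathbf{0}_{n\times r},\mathbf{0}_{n\times 1}] = V_{q'+r+1}$ and $V(t)V(t)^\top = (1-t)\tilde X + t\,X^* \doteq X(t)$. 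Three things need to be checked. (i) Feasibility: the feasible set of \eqref{eq:sdp} is convex (each $X\mapsto g_j(\tr C_jX)$ is convex and $\bS^n_+$ is a convex cone), so $g_j(\tr C_j X(t))\le 0$; since $\tr(V(t)^\top C_j V(t)) = \tr(C_j X(t))$, the point $V(t)$ is feasible for $P_{q'+r+1,\lambda}$ for every $t$. (ii) The penalty vanishes: $V(t)$ has at most $q'+r$ nonzero columns, hence $\rank V(t)\le q'+r < q'+r+1$, so by the \emph{strict}-penalty property $\mathcal{R}_{q'+r+1,\lambda}(V(t)) = 0$ for every $t$ and every $\lambda$ — this is the one place where strictness is essential. (iii) Consequently the objective of $P_{q'+r+1,\lambda}$ evaluated along the path is exactly $\Psi(X(t))$.

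To finish, note that $\Psi(X(t))$ is convex in $t$ (it is $\Psi$ precomposed with the affine map $t\mapsto X(t)$), so for every $t\in(0,1]$
\[
\Psi(X(t)) \ \le\ (1-t)\,\Psi(\tilde X) + t\,\Psi(X^*)\ <\ \Psi(\tilde X)\ =\ \Psi(X(0)) ,
\]
using the strict inequality from the first paragraph. Since $t\mapsto V(t)$ is continuous with $V(t)\to V_{q'+r+1}$ as $t\to 0^+$, every neighbourhood of $V_{q'+r+1}$ contains feasible points of $P_{q'+r+1,\lambda}$ with strictly smaller objective, contradicting the hypothesis that $V_{q'+r+1}$ is a local minimum. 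Hence $\tilde V_{q'}\tilde V_{q'}^\top$ is a global solution of \eqref{eq:sdp}.

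Two points deserve care rather than counting as genuine obstacles. The path uses $\sqrt{t}$, so $V(t)$ is not differentiable at $t=0$ and one cannot argue via first-order stationarity of the local minimum; instead one leans on convexity of $\Psi$ along the whole segment $X(t)$, which yields a \emph{global} (hence a fortiori local) decrease. And one must be sure the penalty is identically zero along the entire path, including at $t=1$; that is precisely what the extra zero column (the ``$+1$'' in $q'+r+1$) buys, by keeping $\rank V(t)$ strictly below the number of columns for every $t$. The remaining manipulations — the identity $\tr(V^\top MV) = \tr(MVV^\top)$, convexity of $\Psi$ and of the constraint functions, and the thin factorizations of $\tilde X$ and $X^*$ — are routine.
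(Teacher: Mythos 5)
Your proposal is correct and takes essentially the same route as the paper's proof: the same interpolating family $[\sqrt{1-t}\,\tilde V_{q'},\ \sqrt{t}\,W,\ \mathbf{0}_{n\times 1}]$ (the paper uses the reversed parametrization $\tau = 1-t$), the same use of the spare column to keep the rank below $q'+r+1$ so the strict penalty vanishes identically, and the same convexity argument for feasibility and for the strict decrease of the objective, contradicting local minimality. Your write-up is in fact somewhat more careful than the paper's about where strictness of the inequality enters and why the $\sqrt{t}$ non-smoothness at $t=0$ is harmless.
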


\begin{proof}
Let us define a family of matrices for $\tau \in [0,1]$ as follows:
\begin{align*}
V(\tau) \triangleq [ \sqrt{\tau} \tilde V_{q'}, \sqrt{(1-\tau)} V_* , {\bf 0}_{n\times 1}],
\end{align*}
where $ (V_*)^\top (V_*)$ is some factorization of $X^*$ with $V_* \in \R^{n \times r} $.

Note that $\forall \tau$, we have $\rank(X(\tau)) < r+q'+1$, and hence
$\forall \lambda, \tau:
R_{q'+r+1, \lambda}(V(\tau)) = 0$.
Now, assume the contradiction, that is, $V_{q'+r+1}$ is a local optimum solution but $\tilde V_{q'}$ is not a global solution.

We show that $\forall \tau \in [0,1]$, $V(\tau)$ is a feasible solution. Indeed, for any $j \in J$ we have
\begin{align*}
    & g_j(\tr(V(\tau)^T C_j \tr(V(\tau)) \leq \\
    & \quad \tau g_j(\tr( [ \tilde V_{q'}, {\bf 0}_{n\times r+1}]^\top C_j [ \tilde V_{q'}, {\bf 0}_{n\times r+1}])) + \\
    & \quad(1-\tau) \tr( [  {\bf 0}_{n\times q'}, V_*,{\bf 0}_{n\times 1}]^\top C_j [  {\bf 0}_{n\times q'}, V_*,{\bf 0}_{n\times 1}])) = \\
    & \hspace{20mm} \tau g_j(\tr(  \tilde V_{q'} ^\top C_j   \tilde V_{q'} )) + (1-\tau) \tr(  X^* C_j  )) \leq  0.
\end{align*}

We just showed that for each $V(\tau), \tau \in [0,1]$ is a feasible point. Now, let us compute the objective value at this point. 
For all $ \tau \in [0,1]$, we have
\[
  \sum_{i \in I}  \tr( V(\tau)^\top S_i V(\tau)) \leq  \tau \sum_{i \in I}
    \tr( \tilde V_{q'}^\top S_i \tilde V_{q'} ) 
   +
 (1-\tau)  
 \sum_{i \in I} \tr( X^* S_i  ) <  \sum_{i \in I} \tr( \tilde V_{q'}^\top S_i \tilde V_{q'} )
\]
which is a contradiction under the assumption that 
$\tilde V_q$ is a local optimum.
\end{proof}

\section{MAP Inference}\label{sec:algo}
A pairwise \acrfull{mrf} is defined for an arbitrary graph $G = (V, E)$ with $n$ vertices. We associate a binary variable $x_i\in \{-1, +1\}$ with each vertex $i\in V$. Let $\theta_i: \{\pm 1\}\to \bR$ and  $\theta_{ij}: \{\pm1\}^2 \to \bR$ defined for each vertex and edge of the graph be vertex and pairwise potential, respectively. Thus, \emph{a posteriori} distribution of $x$ follows the Gibbs distribution: 
\[
p(x|\theta) = \frac{1}{Z(\theta)} e^{U(x|\theta)},
\]
with $U(x;\theta) = \sum_{i\in V} \theta_i(x_i) + \sum_{(i,j)\in E} \theta_{ij}(x_i, x_j).$
The \acrfull{map} estimate is then 
\[
\hat{x} = \argmax\limits_{x\in \{-1, 1\}^n} p(x|\theta) = \argmax\limits_{x\in \{-1, 1\}^n} U(x;\theta),
\tag{MAP}
\]
which is its turn an NP-hard binary quadratic optimization problem, 
\[
\hat x = \argmax\limits_{x\in \{-1, 1\}^n} x^\top S x, 
\]
with indefinite matrix $S$. The \acrshort{sdp} relaxation for this problem is given by \cite{goemans1995improved,Nesterov}:
\begin{gather}
 \min_{X \in \bS^+_n} \tr SX, \quad \text{s.t.: } X_{ii} = 1,\label{eq:lin_sdp} 
\end{gather}
which also covers the Ising model in statistical physics and a number of combinatorial optimization problems. We believe that the approach can be extended to a general setup given by Eq.~\eqref{eq:sdp}. 

An \acrfull{epsdp} relaxation of \eqref{eq:lin_sdp}  has the form 
\begin{align}
 \min_{X \in \bS^+_n} & \tr V^\top S V + R_{\lambda}(V), \quad \text{s.t.: } \|V^i\|_2^2 = 1,\label{eq:lin_sdp_rel}\tag{EP-SDP}
\end{align}
where $V^i$ is the $i$-th column of matrix $V \in \bR^{n\times k}$, $X = V V^\top$. 

\subsection{Numerical Method. } To solve Problem~\eqref{eq:lin_sdp_rel}, we use the Augmented Lagrangian method starting from a sufficiently small value of the penalty parameter $\lambda > 0$ and increasing it in geometric progression, $\lambda_{k+1} = \lambda_{k} \gamma$, with $\gamma > 1$, as summarized in Algorithm \ref{algo:01}. 
The efficiency of the method is due to the efficient computability of gradients of Tsallis, Renyi, and von~Neumann entropies:

\begin{algorithm}
 \SetAlgoLined
    \KwData{Quadratic matrix $S$ of the MAP inference problem, staring point $\lambda_0$, $\gamma > 1$, step size policy $\{\eta_k\}_{k\ge 1}$ accuracy parameters $\varepsilon$, $\epsilon$}
    \KwResult{Solution $V_*$ as a local minimum of \eqref{eq:lin_sdp_rel} of unit rank}
    \Begin{
        $V_0 \leftarrow$ random initialization in $\R^{n\times k}$\;
        \While{$\tr (V_t^\top V_t) - \lambda_{\max} V_t > \varepsilon$}{
        Find local minimum of \ref{eq:lin_sdp_rel}\!$(S, \lambda_t)$ starting from $V_{t-1}$, assign it to $V_t$\;
        \While{$\nabla (\tr V^\top S V + R_{\lambda}(V)) \le \epsilon$}{
        $V = V - \eta_k \nabla (\tr V^\top S V + R_{\lambda}(V))/\|\nabla (\tr V^\top S V + R_{\lambda}(V))\|_2$\;
        $V_i \leftarrow V_i/\|V_i\|_2$ for each row $V_i$\;
        }
        $\lambda_{t+1} = \lambda_t\cdot \gamma$\;
        }
    }
    {\bf Return:} first singular vector of $V_t$.\;
 \caption{Entropy-Penalized SDP.}
 \label{algo:01}%
\end{algorithm}

\begin{lemma}
For any matrix $V \in \bR^{n\times k}$ with $k = {\mathcal O}(1)$, let $X(V) = V^\top V$. Then, gradients of $\ent_\alpha^T(X)$, $\ent^R(X)$, and $\ent^N(X)$ can be computed in $\mathcal{O}(n)$ time. Moreover, if the number of non-zero elements in matrix $A$ is $\mathcal{O}(n)$, then the iteration complexity of Algorithm \ref{algo:01} is $\mathcal{O}(n)$. 
\end{lemma}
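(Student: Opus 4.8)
The plan is to reduce the gradient computation to a small number of operations on a $k\times k$ matrix, and then propagate the $\mathcal O(n)$ cost through the algorithm. First I would observe that since $X(V) = V^\top V$ is $n\times n$ but has rank at most $k = \mathcal O(1)$, its nonzero spectrum coincides with that of the $k\times k$ Gram-type matrix $M \doteq V V^\top \in \bR^{k\times k}$ (or, more precisely, the eigenvalues of $X$ are exactly the eigenvalues of $V V^\top$ together with $n-k$ zeros). Diagonalizing $M = Q\Lambda Q^\top$ costs $\mathcal O(k^3) = \mathcal O(1)$; from the eigenpairs $(\lambda_i, q_i)$ one recovers the nontrivial singular vectors of $X$ as $u_i = V^\top q_i/\sqrt{\lambda_i}$, each computable in $\mathcal O(n)$ time, and there are only $\mathcal O(1)$ of them.

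Next I would write each entropy as a spectral function: $\ent^T_\alpha$, $\ent^R$, and $\ent^N$ all depend on $X$ only through $\tr X = \sum_i \lambda_i$ and $\tr X^\alpha = \sum_i \lambda_i^\alpha$ (for von Neumann, through $\sum_i \lambda_i\log\lambda_i$). For a spectral function $h(X) = \sum_i \phi(\lambda_i)$ the chain rule gives $\nabla_X h = \sum_i \phi'(\lambda_i)\, u_i u_i^\top$, and then $\nabla_V (h\circ X)(V) = 2 V\bigl(\sum_i \phi'(\lambda_i) u_i u_i^\top\bigr) = 2\sum_i \phi'(\lambda_i)\,(V u_i) u_i^\top$. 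Since the $\lambda_i$, the scalars $\phi'(\lambda_i)$ (involving only $\tr X$, $\tr X^\alpha$, and the individual $\lambda_i$), and the vectors $u_i$ are all available in $\mathcal O(n)$ total time, and each outer-product-times-$V$ term $(Vu_i)u_i^\top$ is an $n\times k$ matrix formed in $\mathcal O(nk) = \mathcal O(n)$ time with $\mathcal O(1)$ such terms, the full gradient is assembled in $\mathcal O(n)$ time. I would handle the normalization factor $(\tr X)^{-\alpha}$ (and its derivative) separately as it contributes only scalar corrections times $X$ itself, i.e. times $V M$, again $\mathcal O(n)$.

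Finally, for the iteration-complexity claim I would account for the remaining pieces of Algorithm~\ref{algo:01}: the gradient of $\tr V^\top S V$ is $2SV$, which costs $\mathcal O(\mathrm{nnz}(S)\cdot k) = \mathcal O(n)$ when $S$ has $\mathcal O(n)$ nonzeros; evaluating the stopping tests ($\tr(V_t^\top V_t)$, the largest eigenvalue $\lambda_{\max}$ via the same $k\times k$ diagonalization, and gradient norms) is $\mathcal O(n)$; and the per-row renormalization $V_i \leftarrow V_i/\|V_i\|_2$ is $\mathcal O(nk) = \mathcal O(n)$. Summing a constant number of $\mathcal O(n)$ contributions gives the stated bound. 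I expect the main obstacle to be the bookkeeping around the $\ell^p$-style normalization by $\tr X$ inside the Tsallis and Renyi definitions — one must check that differentiating the quotient does not reintroduce a dense $n\times n$ object — and the degenerate cases where some $\lambda_i = 0$ (or $\lambda_i$ coincide), where the singular-vector recovery $u_i = V^\top q_i/\sqrt{\lambda_i}$ breaks down; there I would argue that the corresponding terms $\phi'(\lambda_i)$ either vanish or cancel by the strict-penalty convention $S(0,\dots,0)=0$ used in Lemma~\ref{lem1}, so they can be dropped without affecting the gradient.
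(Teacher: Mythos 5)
Your proposal is correct and follows essentially the same route as the paper: reduce the spectral computation to an $\mathcal O(1)$-size problem (the paper takes a thin SVD $V=U_1DU_2$ in $\mathcal O(nk^2)$ time, you diagonalize the equivalent $k\times k$ Gram matrix and lift), evaluate the matrix powers in the gradient formulas through that decomposition as $\mathcal O(1)$ many $\mathcal O(n)$-cost vector operations, and charge the quadratic term to a sparse matrix--vector product. Your extra care with the $\tr X$ normalization and the degenerate/zero eigenvalues fills in details the paper leaves implicit (and you inherit, harmlessly, the paper's own inconsistency between $X=V^\top V$ and $X=VV^\top$).
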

\begin{proof}
We start our analysis with Tsallis entropy. First, compute the gradient of $\ent^T_\alpha$ in $V$:
\[
\frac{\partial \ent_\alpha^T(X)}{\partial V} = \frac{\alpha}{1-\alpha}\left(\frac{X^{\alpha-1}}{(\tr{X})^{\alpha}} - \frac{\tr{X^\alpha} }{(\tr{X})^{\alpha + 1}} I \right)V.
\] 

Similarly for Renyi, $\ent^R(X)$, and von Neumann, $\ent^N(X)$, entropies we have
\[
\frac{\partial \ent_\alpha^R(X)}{\partial V} = \frac{\alpha}{1-\alpha}\frac{X^{\alpha-1}}{\tr{X^{\alpha}}} \left( I - \frac{X}{\tr{X}} \right) V 
\]
and
\[
\frac{\partial \ent_\alpha^N(X)}{\partial V} = \frac{\tr{X} I - X}{(\tr{X})^2} \left( I + \log{\frac{X}{\tr{X}}} \right)V.
\]
Following \cite{Holmes}, the singular-value decomposition of matrix $V = U_1 D U_2$ with $U_1 \in \R^{n\times n}$, $D\in \R^{n\times k}$, and $U_2 \in \R^{k\times k}$ can be performed in $\mathcal O(\min{(nk^2, n^2k)}) = \mathcal O(nk^2)$ time. 

For any $\alpha > 1$, the product $X^{\alpha-1}\cdot V = U_1 D^{2\alpha - 1} U_2$ can be computed in time $\mathcal O(n)$ together with $\tr{X^\alpha} = \tr{D^{2\alpha}}$ and $\tr{X} = \tr{D^2}$. Thus, for a fixed $k$,  the gradient $\frac{\partial \ent^T(X)}{\partial V}$ computation time is linear in its dimension. (Here, for any $\alpha \in (0, 1)$, we use the identity $\partial \lambda_i = {\bf  v_i}^\top \partial X {\bf v_i}$.) To finish the proof of the statement, it remains to note that matrix-vector multiplication takes $\mathcal{O}(n)$ time for any matrix with $\mathcal{O}(n)$ non-zero entries. 
\end{proof}

\section{Case Study}\label{sec:case_study}

In this section, we compare our penalized algorithm with other conventional approaches to MAP problems. We fix the width of factorization to $k = 10$, since there is no significant gain in practice for larger values of $k$, cf.  \cite{Montanari}. We choose $2\eta_k \beta = 1$, where $\beta$ is the Lipschitz constant of the gradient in $\ell_2$ norm and $\gamma = 3/2$. Parameters $\lambda_0$ and $\gamma$ of Algorithm~\ref{algo:01} are usually chosen by a few iterations of random search. It is usually enough to have about 35 iterations for penalty updates and a few hundred iterations to find a local minimum using Algorithm~\ref{algo:01}. We emphasize that matrices we obtain by solving \ref{eq:lin_sdp_rel} are rank-one solutions on all \acrshort{map} instances presented. Thus, we do not need any further rounding procedure.

First, in Table~1, we show the performance of our algorithm on selected hard \acrshort{map} inference problems from the BiqMac collection\footnote{http://biqmac.uni-klu.ac.at/biqmaclib.html}. We selected a few of the hardest instances ("gkaif" among them)---dense quadratic binary problems of 500 variables. 

\begin{table}[!t]
\label{tbl:new}
\centering
\begin{adjustbox}{width=\columnwidth,center}
\begin{tabular}{l|c|c|c|c|c}
Instance & gka1f  & gka2f & gka3f & gka4f & gka5f\\
\hline\hline
\multicolumn{6}{c}{SDP}\\
\hline\hline 
objective &  59426 &  97809   &   1347603 & 168616 & 185090  \\
upper bound &66783 & 109826 & 152758 & out of & out of \\
time [s] & 669 & 673 &592 & memory & memory\\
\hline\hline
\multicolumn{6}{c}{EP-SDP}\\
\hline\hline
objective & 60840 &  {\bf 99268}   &  {\bf 136567} & {\bf 170669} & {\bf 189762}  \\
upper bound & n/a & n/a & n/a & n/a & n/a \\
time [s] &   3.3 &  5.0  &  5.3 & 5.2 & 5.7  \\ 
\hline\hline
\multicolumn{6}{c}{Gurobi}\\
\hline\hline
objective &  {\bf 64678} &  97594  & 131898 & 162875 & 189324  \\
upper bound &   73267 &  112223  &  153726 & 190073 & 218428 \\ 
time [s]. &   70 &  70  & 71 & 70 & 70  \\
\end{tabular}
\end{adjustbox}
\caption[Comparison between SDP, EP-SDP and the MIP solver]{The comparison between the standard \acrshort{sdp} relaxation, \acrfull{epsdp} and the \acrshort{mip} solver (Gurobi) for the MAP instances from BiqMac collection.}
\end{table}

We compared our algorithm (\acrshort{epsdp} with Tsallis entropy and $\alpha=2$) with the plain-vanilla semidefinite programming instance solved by the interior-point method, possibly with rounding using the best of one thousand roundings of \cite{goemans1995improved} and also with Gurobi \cite{gurobi}, a \acrfull{mip} solver. To avoid any confusion, we solve the corresponding maximization problems; by the objective value, we mean the value at a feasible solution produced by the method (e.g., rounded solution of \acrshort{sdp} relaxation), which is a lower bound for the corresponding problem.
Because these problems are of the same size (but varying density), the running time of each method is almost constant. It took around 10 minutes for \cite{cvxpy} to solve the \acrshort{sdp} relaxation, and it runs out of memory for the two problems with higher density. Within five seconds, \acrshort{epsdp} obtains results that are better than what Gurobi can produce in 70 seconds. 

\begin{table}[!th]
\centering
\begin{tabular}{l|c|c|c|c}
& \multicolumn{4}{c}{GSET Instance}\\
\hline\hline
& 1 & 2 & 3& 4 \\
\hline
EP-SDP\\(T, $\alpha=2.0$)\!\!& 11485 & 11469 & 11429 & 11442  \\ 
(T, $\alpha=1.1$)\!\!& 11454 & 11463 & 11444 &  11508  \\ 
(R, $\alpha=5$)\!& 11508 & {\bf 11519} & 11496 & {\bf 11531}   \\ 
(R, $\alpha=10$) & {\bf 11520} & 11420 & 11523 & 11523    \\
SDP & 11372 & 11363 & 11279 & 11355 \\ 
Loopy BP & 10210 & 10687 & 10415 & 10389 \\
Mean-Field & 11493 & 11515 & {\bf 11525} & 11512\\
\hline\hline
& \multicolumn{4}{c}{GSET Instance}\\
\hline\hline
& 5 & 6 & 7& 8 \\
\hline
EP-SDP\\(T, $\alpha=2$)& 11427 & 2059 & 1888 & 1866\\ 
(T, $\alpha=1.1$)  & 11506 & 2075 & 1858 &  1895\\ 
(R, $\alpha=5$)  & 11527 & {\bf 2127} & {\bf 1942}  & 1954\\ 
(R, $\alpha=10$) & {\bf 11538} & 2112 & 1940  & {\bf 1958}\\
SDP  & 11313 &  1945 &  1728  & 1727\\ 
Loopy BP & 10143 & 1076 & 964 &  731\\
Mean-Field  & 11528 & 2096 & 1906 & 1912\\
\hline\hline
\end{tabular}
\begin{tabular}{l|c|c|c|c|c}
& \multicolumn{5}{c}{GSET Instance}\\
\hline\hline
& 9 & 10& 11& 12 & 13 \\
\hline
EP-SDP\\(T, $\alpha=2$) & 1933 & 1882 & 532 &  530 & 560 \\ 
(T, $\alpha=1.1$)   & 1969 & 1861 & 544 &  536 & {\bf 568} \\ 
(R, $\alpha=5$)  & 1992 & 1960 & {\bf 550} & {\bf 548} & {\bf 568} \\
(R, $\alpha=10$)  & {\bf 2006} & {\bf 1982}  & 544 & 546 & 564 \\
SDP  & 1767 & 1784 & 524  & 514  &  540 \\ 
Loopy BP & 1021 & 820 & 424 & 412 & 482 \\ 
Mean-Field  & 1940 & 1902 & 542 & 538 & 564\\
\end{tabular}
\label{tbl:02}
\caption[The comparison results between and other methods on MAP instances]{The comparison results between \acrfull{epsdp} and other heuristics for solving \acrshort{map} such as \acrshort{sdp}, Loopy Belief Propagation and Mean-field method. Under the umbrella term \acrfull{epsdp} we evaluate here Tsallis and Renyi entropy regularisers for different values of the parameter $\alpha$.}
\end{table}

In  our study, parameter $\alpha$ of entropies $\mathcal{S}^T_\alpha$, $\mathcal{S}^N_\alpha$, and $\mathcal{S}^R_\alpha$ is chosen on an exponential grid from 1 to 10 with a step 1.1. After experimentation, we note that $\alpha = 1.1$ and $\alpha = 5.0$ seem to improve the results the best for the \acrshort{bm} with Tsallis and Renyi entropies, respectively, although the difference between different $\alpha \in (1,10)$ is not very significant for either of the (Tsallis and Renyi) entropies. 

Table~2 summarizes the results of solving the Max-Cut problem over a GSET collection of sparse graphs\footnote{https://sparse.tamu.edu/Gset}. As we see from the experiments, the results of applying suitable entropy often  outperform both the plain-vanilla \acrshort{sdp} with the classical Goermans-Williamson rounding, the mean-field approximation, as well as the results of UGM solver\footnote{https://www.cs.ubc.ca/~schmidtm/Software/UGM.html} for loopy belief propagation and mean-field inference. 
It is worth noting that for several instances of the GSET graph collection, loopy belief propagation provides rather weak results. 
Usually, strong results of the loopy belief propagation are complementary to those of the mean-field approximation, which is supported by our empirical results. Results of both loopy belief propagation and mean-field approximation can be substantially improved using the linear-programming belief-propagation approach (LP-BP). 

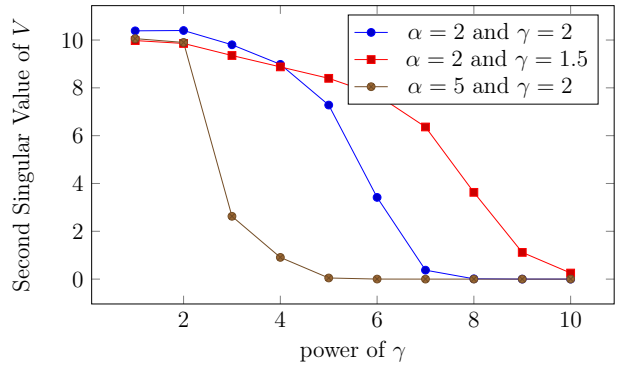
\begin{figure}[t!]
\centering
      \begin{tikzpicture}[scale=0.73]
    \begin{axis}[legend pos=north east, title={},xlabel=power of $\gamma$, ylabel=Second Singular Value of $V$]
         \addplot table [x=iter, y=lambda2-g2, col sep=comma] {tables/epsdp/decreasing_rank.txt};
	  \addlegendentry{$\alpha=2$ and $\gamma = 2$}
         \addplot table [x=iter, y=g1.5, col sep=comma] {tables/epsdp/decreasing_rank.txt};
         \addlegendentry{\hspace{2.5mm}$\alpha=2$ and $\gamma = 1.5$}
         \addplot table [x=iter, y=alpha5-g2, col sep=comma] {tables/epsdp/decreasing_rank.txt};
         \addlegendentry{$\alpha=5$ and $\gamma = 2$}
    \end{axis}
    \end{tikzpicture}
\caption{Rank decrement.}\label{rankdef}
\end{figure}

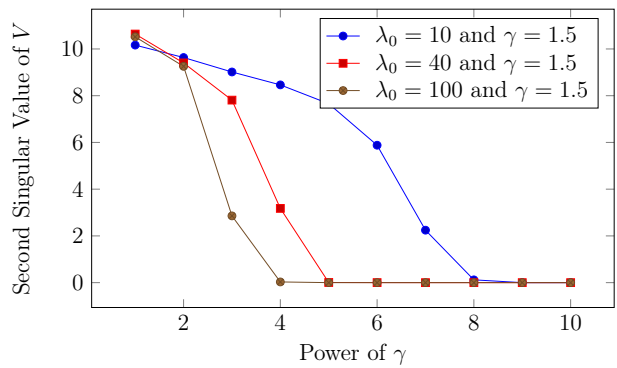
\begin{figure}[t!]
\centering
      \begin{tikzpicture}[scale=0.73]
    \begin{axis}[legend pos=north east, title={},xlabel=Power of $\gamma$, ylabel=Second Singular Value of $V$]
         \addplot table [x=iter, y=10, col sep=comma] {tables/epsdp/inc_lambda.txt};
	  \addlegendentry{$\lambda_0=10$ and $\gamma = 1.5$}
         \addplot table [x=iter, y=40, col sep=comma] {tables/epsdp/inc_lambda.txt};
         \addlegendentry{$\lambda_0=40$ and $\gamma = 1.5$}
         \addplot table [x=iter, y=100, col sep=comma] {tables/epsdp/inc_lambda.txt};
         \addlegendentry{\hspace{2mm}$\lambda_0=100$ and $\gamma = 1.5$}
    \end{axis}
    \end{tikzpicture}
\caption{Rank decrement with multistart.}\label{rankdef2}
\end{figure}

We also want to point out that our iterative algorithm successfully decreases the rank of the solution. The higher the penalization parameter, the lower the rank. We illustrate this in Figure~\ref{rankdef}, where for Tsallis entropies with $\alpha = 2$ and $\alpha = 5$, we plot the second singular value of  matrix $V$. For this plot, we considered the Max-Cut problem for the first graph from the GSET collection and the Tsallis entropy as the penalization function. In Figure~\ref{rankdef2}, we illustrate the same concept for fixed penalization (Tsallis entropy with $\alpha = 2$) and different initial values of the multiplier $\lambda$. We observe that for different penalization functions and update schemes, the rank of the solution decreases gradually with each step. In practice, our iterative algorithm could be seen as a universal rounding procedure for \acrshort{sdp} relaxations. Indeed, if we choose a large-enough penalization update (e.g., $\gamma = 2$ as in Figure 1), we easily obtain a rank-one solution that is not worse and often is substantially better than solutions obtained by randomized rounding. 

Similar problems to the one considered in the chapter appear in a variety of applications such as recommender systems \cite{sidana2021user,anikin2020efficient}, machine learning~\cite{amini2022self,chertkov2020gauges,maximov2018rademacher}, and statistical physics \cite{likhosherstov2019new,likhosherstov2019inference}. 

Overall, we would like to stress that Algorithm~\ref{algo:01} is very fast. This is shown in Figure~3 and Table~3, where we compare run times of \acrshort{epsdp}, \acrfull{bm}, and interior-point method solvers (\acrshort{sdp}) for various Erdos-Renyi random graphs. From the data, we see that (assuming the fixed width of factorization $k= 10$) \acrshort{epsdp} run time increases linearly with the number of vertices. Indeed, throughout the benchmark instances tested, the run time does not exceed a few seconds per each of the test cases. At the same time, the bound is often  almost as good as that of the Branch and Bound Biq-Mac Solver of \cite{krislock2014improved},  which requires a significant amount of time. 

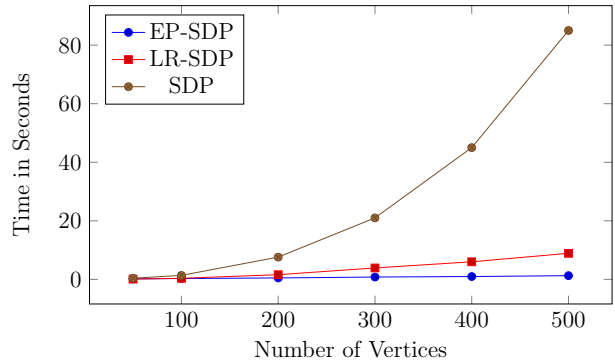
\begin{figure}[t!]
\centering
      \begin{tikzpicture}[scale=0.73]
    \begin{axis}[legend pos=north west, title={},xlabel=Number of Vertices, ylabel=Time in Seconds]
         \addplot table [x=vert, y=EPSDP, col sep=comma] {tables/epsdp/time.txt};
	  \addlegendentry{EP-SDP}
         \addplot table [x=vert, y=SDPLR, col sep=comma] {tables/epsdp/time.txt};
         \addlegendentry{LR-SDP}
         \addplot table [x=vert, y=SDP, col sep=comma] {tables/epsdp/time.txt};
         \addlegendentry{SDP}
    \end{axis}
    \end{tikzpicture} 
\caption[Time complexity for Erdos-Renyi graphs (chart)]{Run time comparison for Erdos-Renyi random graphs of different size. Here we compare only \acrshort{sdp}-based approaches and show that the \acrshort{epsdp} is the fastest and it's time complexity is linear in the number of vertices.}\label{time}
\end{figure}

\begin{table}[!t]
\label{tbl:time}
\centering
\begin{tabular}{l|c|c|c}
Instance &  {\bf EP-SDP} & {\bf LR-SDP} & {\bf SDP}\\
\hline\hline
E-R(50, 0.2) & 0.2s &  0.1s   &   0.4s  \\
\hline
E-R(100, 0.2) & 0.3s &  0.4s   &  1.4s  \\
\hline
E-R(200, 0.2) &  0.5s &  1.6s   &   7.6s  \\
\hline
E-R(300, 0.2) &  0.8s &  3.9s   &   21.0s  \\
\hline
E-R(400, 0.2) &  1.0s &  6.0s   &   45.0s  \\
\hline
E-R(500, 0.2) &  1.3s &  8.9s   &   85.0s  \\
\end{tabular}
\caption[Time complexity for Erdos-Renyi graphs (table)]{Run time comparison for Erdos-Renyi random graphs of different size (the number of vertices). Here we compare only \acrshort{sdp}-based approaches and show that the \acrshort{epsdp} is the fastest. The results are averaged over 500 samples for every row.}
\end{table}

\section{Conclusions}\label{sec:conclusion}

This chapter presented a unified view of the penalty functions used in low-rank semidefinite programming using entropy as a penalty. 
This makes it possible to find a low-rank optimum, where there are optima of multiple ranks.
Semidefinite programs with an entropy penalty can be solved efficiently using first-order optimization methods with linear-time per-iteration complexity, which makes them applicable to large-scale problems that appear in machine learning and polynomial optimization. 
Our case study illustrated the practical efficiency on binary MAP inference problems. 
The next step in this direction is to consider the structure of the \acrshort{sdp}, which seems to be  crucial for further scalability. 
\chapter{Ising Model Control}
\label{cap:mitigation}

\section{Introduction}

We follow the previous work \cite{GMPandemic} in justification for the use of the \acrfull{gm} to study and mitigate pandemics. Therefore, we start from providing a brief recap of the prior literature on modeling of the epidemics, describe the logic which led us in \cite{GMPandemic} to the Ising Model (IM) formulation, and then state formally the inference and prevention problems addressed in the manuscript.

Difficulty in both predicting and neutralizing the spread of pandemics is a major social challenge of humanity. Technically speaking, we are yet to design a coherent data lifecycle for modeling and prevention both in terms of the global strategies and local tactics. To address the challenge, we must devise a hierarchy of spatio-temporal models with different resolutions -- from individual to community, county to the city, and from the moment a pathogen first enters our bodies, to days of disease development and to community transmission. Importantly, the models should be efficient in computing probabilistic predictions (for instance, offering the marginal probability heat map for the city neighborhoods to transition from the current/prior state of infection to the projected/a-posteriori state in two weeks).  

Epidemiology and Mathematical Biology experts have relied in the past on a number of modeling approaches. The \acrfull{abm}, introduced in epidemiology in 2004-2008~\cite{eubank_modelling_2004,2005Longini,ferguson_strategies_2005,ferguson_strategies_2006,2006Germann,2008Halloran}, have complemented the earlier  compartmental models~\cite{1910Ross,1927Kermack,1991Anderson,2000Hethcore}. Using \acrshort{abm}s, even though not exclusive to epidemiology \cite{ABM-wikipedia,2018Downey}, became a breakthrough in the field, as they allowed to make a significant improvement in the quality of predictions, especially in the spatio-temporal resolution of how the disease spreads and how one can mitigate its spread. The models became and remained a core part of the epidemiology data life-cycle. (See for instance \cite{ABM-Columbia,ABM-Gates} for most recent bibliography.) The \acrshort{abm}s provide a detailed prediction of how pandemics spread within counties, cities, and regions.  A majority of the country-, city- or county- scale testbeds testing various mitigation strategies are resolved nowadays with \acrshort{abm}s. In particular, recently \acrshort{abm}s have been used extensively to inform public health in (non-pharmaceutical) interventions against the spread of COVID-19 ~\cite{2020Ferguson,eubank_commentary_2020,lanl2020covid,2020ABM-calibration,kaxiras2020multiple}, and verify new strategies like test-trace-quarantine \cite{ABM-Gates}, among many other applications. 

There are two major problems with the modeling of pandemic.  First, many parameters need to be calibrated on data.  Second, even when calibrated for the current state of pandemic the models which are too detailed become impractical for making a forecast and for developing prevention strategies -- both requiring checking multiple (forecast and/or prevention) scenarios. Using \acrshort{abm}s, which are clearly over-modeled (too detailed) is especially problematic in the context of the latter.

For example, the open-source \acrshort{abm} solver FLUTE~\cite{chao2010flute} developed originally for modeling influenza, works with data that are acquired through Geographic Information Systems (GIS) on the scale of census tracts or communities, which is a very reasonable scale of spatial resolution to understand the dynamics of pandemics on a local scale. FLUTE populates each of the communities with thousands to millions of inhabitants in order to account for their daily patterns of travel. We believe that constructing effective \acrfull{gm} of Pandemics with community-scale spatial resolution and then modeling pairwise (and possibly higher-order) epidemic interactions between communities directly, without introducing the thousands-to-millions of dummy agents, will complement (as discussed in the next paragraph), but also improve upon \acrshort{abm}s by being more efficient, robust and easier to calibrate.

An important, and possibly one of the first, \acrfull{gm} of the COVID-19 pandemic was proposed in \cite{chang_mobility_2020}. Dynamic bi-partite \acrshort{gm}s connecting census tracts to specific Points Of Interest (non-residential locations that people visit such as restaurants, grocery stores and religious establishments) within the city and studying dynamics of the four-state (Susceptible, Exposed, Infectious and Removed) of a census tract (graph node) on the graph, were constructed in \cite{chang_mobility_2020}  for major metro-area in USA based on the SafeGraph mobility data \cite{SafeGraph-Mobility}.

In fact, similar dynamic \acrshort{gm}s, e.g. of the \acrfull{icm} type  \cite{2003KempeKleinbergTardos,2012NetrapalliSanghavi,2012GomezLeskovecKrause,KhaDilSon13,2016RosenfeldNitzanGloberson}, were introduced even earlier in the CS/AI literature in the context of modeling how the rumors spread over social networks (with a side reference on using \acrshort{icm} in epidemiology). As argued in \cite{GMPandemic} the \acrfull{icm} can be adapted to modeling pandemics. (Another interesting use of the \acrshort{icm} to model COVID-19 pandemic was discussed in \cite{Chen_2020}.) In its minimal version, an \acrshort{icm} of Pandemic can be built as follows. Assume that the virus spreads in the community (census tract) sufficiently fast, say within five days -- which is the estimate for the early versions of COVID-19 median incubation period. If an infected person enters a community/neighborhood but does not stay there,  he infects others with some probability. If a single resident of the community becomes infected, all other residents are assumed infected as well (instantaneously). The model is a discrete-time dynamic model in which nodes in a network are in one of the three states: {\bf S}usceptible, {\bf I}nfected, or {\bf R}emoved. The nodes represent communities/neighborhoods. 
A contact between an {\bf I}nfected community/node and another community which is {\bf S}usceptible has an assigned probability of disease transmission, which can also be interpreted as the probability of turning the {\bf S} state into {\bf I} state.  Consistently with what was described above, the network is represented as a graph, where nodes are tracts and edges, connecting two tracts, have an associated strength of interaction representing the probability for the infection to spread from one node to its neighbor. A seed of the infection is injected initially at random, for example, mimicking an exogenous super-spreader infection event in the area; examples could include political or religious gatherings. See Figure~\ref{fig:ICM} illustrating dynamics of the cascade model over 3-by-3 grid  graph. Color coding of nodes is according to {\bf S}usceptible=blue, {\bf I}nfected=red, {\bf R}emoved=black. Given the starting infection configuration, each infected community can infect its graph-neighbor community during the next time step with the probability associated with the edge connecting the two communities. Then the infected community moves into the removed state. The attempt to infect each neighbor is independent of all other neighbors. This creates a cascading spread of the virus across the network. The cascade stops in a finite number of steps, thereby generating a random {\bf R}emoved pattern, shown in black in the Fig.~\ref{fig:ICM},  while other communities which were never infected (remain {\bf S}usceptible) are shown in blue.

\begin{figure*}[t]
\centering
\includegraphics[width=0.2\textwidth]{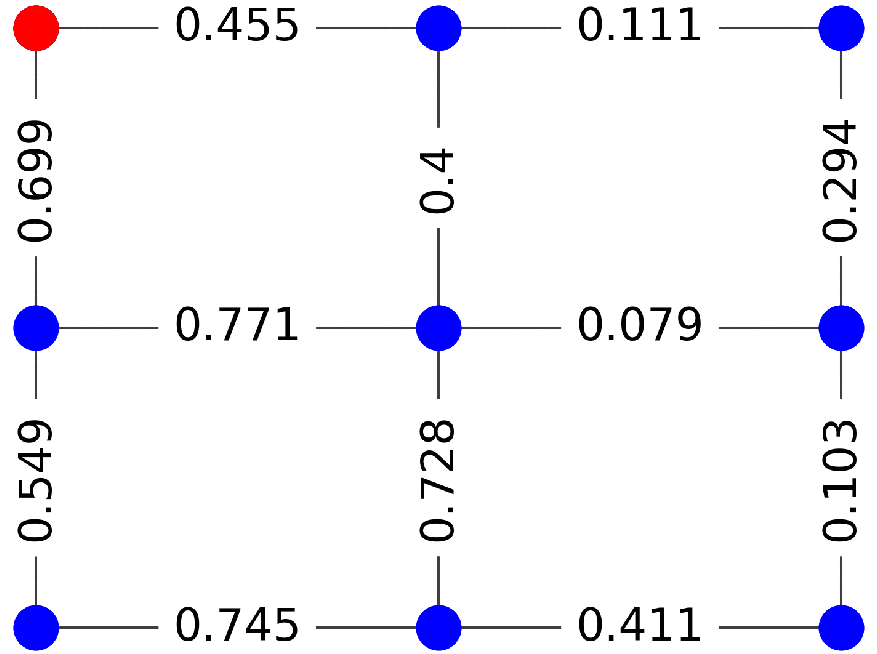}\hfill
\includegraphics[width=0.2\textwidth]{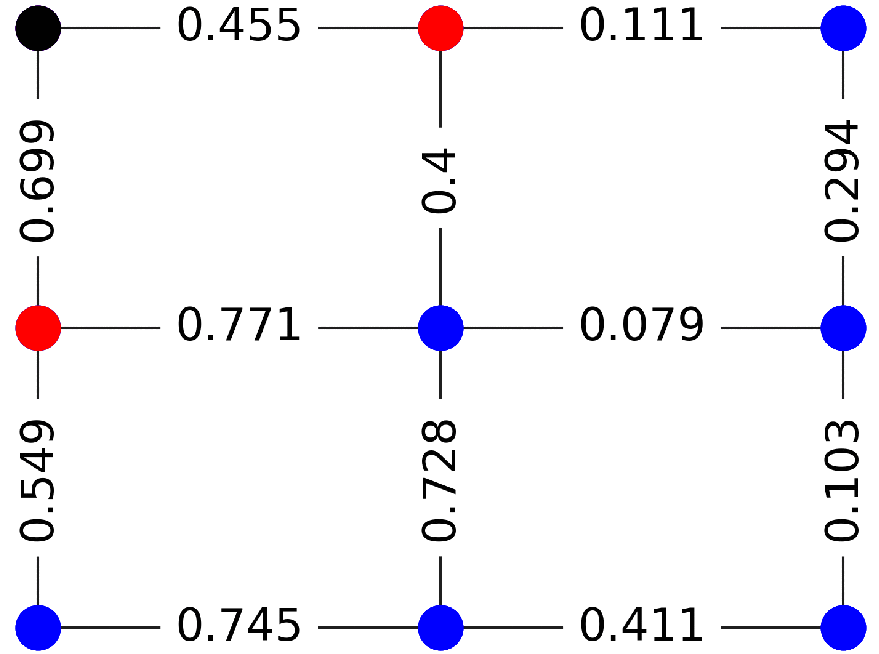}\hfill
\includegraphics[width=0.2\textwidth]{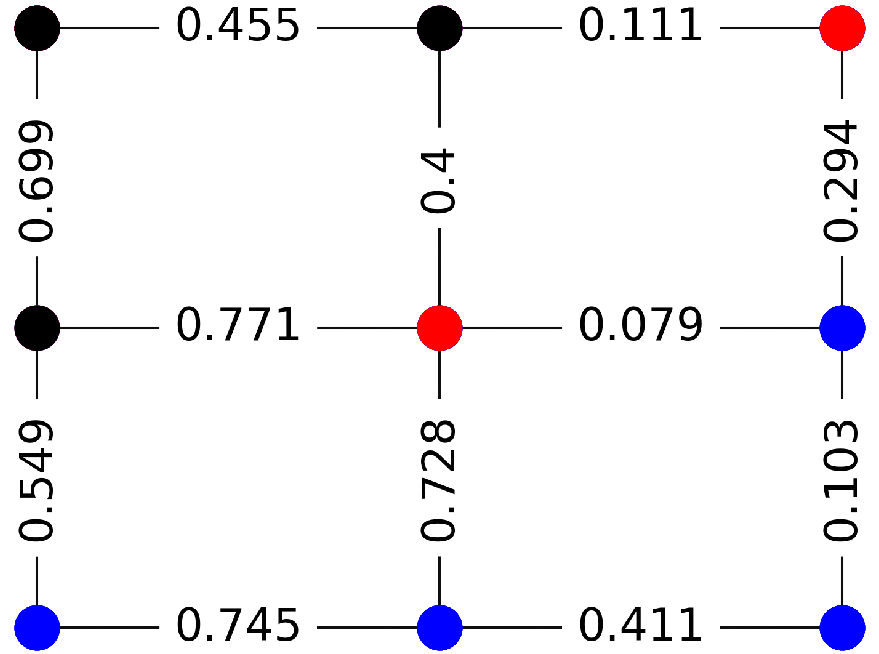}\hfill
\includegraphics[width=0.2\textwidth]{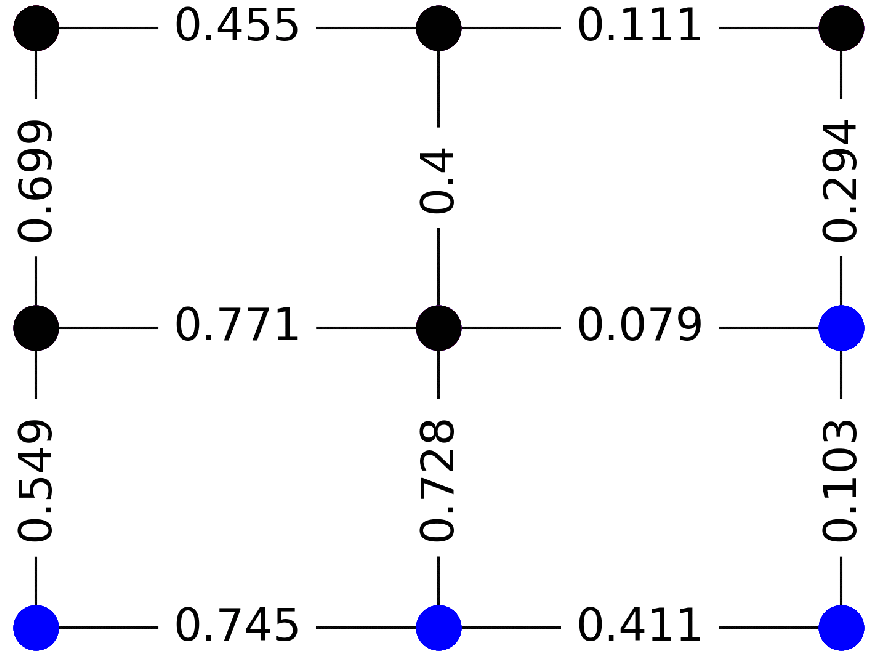}
\caption[Ising model of pandemic]{An exemplary random sequence (top-left to top-right to bottom-left to bottom-right) of the Independent Cascade Model (ICM) dynamics over $3\times3$ grid. Nodes colored red, blue, and black are {\bf I}nfected, {\bf S}usceptible, and {\bf R}emoved at the respective stage of the dynamical process. This (shown) sample of the dynamic process terminates in 3 steps. \acrfull{imp}, which is the focal point of this manuscript, describes a regularized version of the ICM terminal state, where only two states ({\bf S}-blue and {\bf R}-black) are left. (See text for details.)}
\label{fig:ICM}
\end{figure*}

It was shown in \cite{GMPandemic} that with some regularization applied, statistics of the terminal state of the Cascade Model of Pandemic turns into a Graphical Model of the attractive Ising Model type. 

\underline{\bf This chapter Road Map.}  Working with the Ising Model of Pandemic, we start the technical part of the manuscript by posing the {\bf Inference/Prediction Challenge} in Section \ref{ref:IMP}. Here,  the problem is stated,  first, as the Maximum A-Posteriori over an attractive Ising Model,  and we argue,  following the approach which is classic in the \acrshort{gm} literature,  that problem can be re-stated as a tractable \acrshort{lp}. We then proceed to Section  \ref{sec:Prevention} to pose the main challenge addressed in the manuscript -- the {\bf Prevention Challenge} -- as the two-level optimization with inner step requiring resolution of the aforementioned Prediction Challenge. Aiming to reduce the complexity of the Prevention problem, we turn in Section \ref{sec:Geometry} to the analysis of the conditions in the formulation of the Prediction Challenge, describing the Safety domain in the space of the Ising Model parameters. We show the Safety domain is actually a polytope, even though exponential in the size of the system.  We proceed in Section \ref{sec:Projection} with analysis of the Prevention Challenge,  discussing the interpretation of the problem as a projection to the Safety Polytope from the polytope exterior, needed when the bare prediction suggests that system will be found with high probability outside of the Safety Polytope. Section \ref{sec:Polarized} is devoted to approximation which allows an enormous reduction in the problem complexity. We suggest here that if the graph of the system is sufficiently dense, the resulting \acrshort{map} solution may only be in one of the two polarized states (a) completely safe (no other nodes except the initially infected) pick the infection, or (b) the infection is spread over the entire system. We support this remarkable simplification by detailed empirical analysis and also by some theoretical arguments. Section \ref{sec:Experiments} is devoted to the experimental illustration of the methodology on the practical example of the Graphical Model of Seattle. The manuscript is concluded in Section \ref{sec:Conclusions} with a brief summary and discussion of the path forward.

\section{Ising Model of Pandemic}\label{ref:IMP}

As argued in \cite{GMPandemic} the terminal state of a dynamic model generalizing the \acrshort{icm} model can be represented by the \acrfull{imp}, defined over graph $\cal G = ({\cal V}, {\cal E})$, where ${\cal V}$ is the set of $N=|{\cal V}|$ nodes and ${\cal E}$ is the set of undirected edges. The \acrshort{imp}, parameterized by the vector of the node-local biases, $h=(h_a|a\in\cal V)\in \mathbb{R}^N$, and by the vector of the pair-wise (edge) interactions, $J=(J_{ab}|\{a,b\}\in {\cal E})$, 
describes the following Gibbs-like probability distribution for a state, $x=(x_a=\pm 1|a\in{\cal V})\in 2^{|\cal V|}$, associated with $\cal V$:
\begin{equation}
P(x~ \vert~ J, h) = \frac{\exp{(-E(x~ \vert~ J, h))}}{Z(J, h)}, \label{eq:Gibbs}
\end{equation}
where any node, $a\in \cal V$ can be found in either {\bf S}- (susceptable, never infected) state,  marked as $x_a=-1$, or {\bf R}- (removed, i.e. infected prior to the termination) state, marked as $x_a=+1$.  In Eq.~(\ref{eq:Gibbs}), $E(x| J, h)$ and $Z(J, h)$ are model's energy function and partition function respectively:
\begin{eqnarray}\label{eq:E}
E(x~ \vert~ J, h) &=& \sum\limits_{a\in \cal V} h_a x_a - \sum\limits_{a, b\in \cal V} J_{ab} x_a x_b,\\ \label{eq:pf}
Z(J, h) &=& \sum\limits_{x} \left (\sum\limits_{a\in \cal V} h_a x_a - \sum\limits_{a, b\in \cal V} J_{ab} x_a x_b \right ). 
\end{eqnarray}

In what follows, we will focus on finding the \acrfull{map} state of the \acrshort{imp} conditioned to a particular initialization -- setting a subset of nodes, ${\cal I}\in {\cal V}$,  to be infected. We coin the \acrshort{map} problem {\bf Inference Challenge}:
\begin{gather}\label{eq:InfQ}
x^{(\text{MAP})}({\cal I}~ \vert~ J,h)=\text{arg}\min\limits_{x} E(x~ \vert~ J, h)\Big|_{\forall a\in {\cal I}:\quad  x_a = +1},
\end{gather}
where we emphasize dependence of the \acrshort{map} solution on the set of the initially infected nodes, ${\cal I}$. 

Note that in general finding $x^{(\text{MAP})}$ is NP-hard \cite{Barahona_1982}.  However  if $J>0$ element-wise, i.e. the Ising Model is attractive (also called ferromagnetic in statistical physics), Eq.~(\ref{eq:InfQ}) becomes equivalent to a tractable (polynomial in $N$) Linear Programming (see \cite{14ZWP} and references therein).  Notice a few other tractable cases for graphs with specific topology~\cite{chertkov2020gauges,likhosherstov2020tractable,likhosherstov2019inference,likhosherstov2019new}
In fact, the \acrshort{imp} is attractive, reflecting the fact that the state of a node is likely to be aligned with the state of its neighbor. 

Let us also emphasize some other features of the \acrshort{imp}:
\begin{enumerate}
    \item $\cal G$ should be thought of as an "interaction" graph of a city,  reflecting transportation, commutes, and other forms of interactions between populations with the homes at the two nodes (census tracts) linked by an edge. The strength of a particular $J_{ab}$ shows the level of interaction associated with the edge $\{a,b\}$. 
    \item A component, $h_a$, of the vector of local biases, $h$, is reflecting $a$-node specific factors such as immunization level, imposed quarantine, and degree of compliance with the public health measures (e.g., wearing masks and following other rules). Large negative/positive $h_a$ shows that residents of the census tract associated with the node $a$ are largely healthy/infected. 
\end{enumerate}
If solution of the Inference Challenge problem is such that the ${\bf R}$-subset of the \acrshort{map} solution, $x^{(\text{MAP})}({\cal I}|J,h)$, i.e. 
\begin{gather}\label{eq:V-I}
    {\cal R}({\cal I},J,h)=\Big\{a\in {\cal V} \, | \, x_a^{(\text{MAP})}({\cal I}|J,h)=+1\Big\},
\end{gather}
is sufficiently large, we would like to mitigate the infection, therefore setting the Prevention Challenge discussed in the next Section.

\section{Prevention Challenge}\label{sec:Prevention}

Let us assume that modification of $J$ and $h$ are possible and consider the space of all feasible $J$ and $h$. We will then identify {\it Safe Domain} as a sub-space of feasible $J$ and $h$ such that for all the initial sets of the initially infected nodes, ${\cal I}$, considered the resulting "infected" subset, ${\cal R}({\cal I},J,h)$, is sufficiently small. A more accurate definition of the Safe Domain follows.  Then, we rely on the definition to formulate the control/mitigation problem coined Prevention Challenge. At this stage, we would also like to emphasize that studying the geometry of the Safe Domain is one of the key contributions of this manuscript.

{\bf Definition.} 
Consider \acrshort{imp} over ${\cal G}=({\cal V},{\cal E})$ and  with the parameters $(J,h)$.  Let us also assume that the set of initially infected nodes, ${\cal I}$, is drawn from the list, $\Upsilon$. We say that $(J, h)$ is in the $k$-{\it Safe Domain} if for every ${\cal I}$ from $\Upsilon$ the number of ${\bf R}$-nodes in the \acrshort{map} solution (\ref{eq:InfQ}), is at most $k$, i.e. \begin{gather}\label{eq:k-safe}
    \forall {\cal I}\in \Upsilon:\quad |{\cal R}({\cal I},J,h)|\leq k,
\end{gather}
where ${\cal R}({\cal I},J,h)$ is defined in Eq.~(\ref{eq:V-I}). 

{\bf Prevention Challenge:} Given $(J^{(0)},h^{(0)})$ describing the bare status of the system (city) which is not in the $k$-{\it Safe Domain}, and given the cost of the $(J,h)$ change, $C\left((J,h);(J^{(0)},h^{(0)})\right)$, what is least expensive change to $(J^{(0)},h^{(0)})$ state of the system which is in the the $k$-{\it Safe Domain}? Formally,  we are interested to solve the following optimization:
\begin{gather}\label{eq:prevention-Q}
    (J^{(\text{\tiny corr})},h^{(\text{\tiny corr})})=\text{arg}\min\limits_{(J,h)} C\left((J,h);(J^{(0)},h^{(0)})\right)_{\text{Eq.~(\ref{eq:k-safe})}}.
\end{gather}

Expressing it informally, the Prevention Challenge seeks to identify a minimal correction (thus "corr" as the upper index) $(J^{(\text{\tiny corr})},h^{(\text{\tiny corr})})$, which will move the system to the safe regime from the unsafe bare one, $(J^{(0)},h^{(0)})$.  The measures may include limiting interaction along some edges of the graph, thus modifying some components of $J$, or enforcing local biases, e.g., increasing level of vaccination, at some component of $h$.

Given that condition in Eq.~(\ref{eq:k-safe}) also requires solving Eq.~(\ref{eq:InfQ}) for each candidate $(J,h)$, the {\bf Prevention Challenge} formulation is a difficult two-level optimization.
However, as we will see in the next Section, the condition in Eq.~(\ref{eq:k-safe}) (and thus the inner part of the aforementioned two-level optimization) can be re-stated as the requirement of being inside of a polytope in the $(J,h)$ space. In other words, the $(k)$-{\it Safe Domain} is actually a polytope in the $(J,h)$ space.

\section{Geometry of the MAP States}\label{sec:Geometry}

Before solving the Prevention Challenge problem, we want to shed some light on the geometry of the \acrshort{map} states. We work here in the space of all the Ising models over a graph ${\cal G}=({\cal V},{\cal E})$, where each of the models is specified by $(J,h)$. 

{\bf Proposition.} Safe Domain of a graph ${\cal G}=({\cal V},{\cal E})$ with $N=|{\cal V}|$ nodes is a polytope in the space of all feasible parameters, $(J,h)$, defined by an exponential in $N$ number of linear constraints.

{\bf Remark.} The Proposition allows us, from now on, to use {\it Safe Polytope} instead of the {\it Safe Domain}.

{\bf Proof of the Proposition.} The space of all the Ising models is divided into $2^N$ regions by the corresponding \acrshort{map} states. Moreover, the boundary between any pair of neighboring regions is linear: consider two states $x^{(i)}$ and $ x^{(j)}$, and denote $(J,h)^{(i)}$ (resp.  $(J,h)^{(j)}$) the set of all the Ising models with the \acrshort{map} state $x^{(i)}$ (resp. $x^{(j)}$), then $(J,h)^{(i)}$ and $(J,h)^{(j)}$ are separated by the equation, $E(x^{(i)}~ \vert~ J,h) = E(x^{(j)}~ \vert~ J,h)$, which is linear in $(J,h)$. For a subset, $R\subseteq \cal V$, of nodes, let $x^{(R)}$ be the state in which, $x_a=+1,\ \forall a\in R, x_a=-1,\ \forall a\notin R$. Let $X^{(R)}$ be the set
of all the \acrshort{map} states, $x$, such that $\forall a\in R,\ x_a=+1$ (while other nodes, i.e. $b\in {\cal V}\setminus R$, are not constrained, $x_b=\pm 1$). Then the $k$-Safe Polytope, which we denote, $\text{SP}(k)$, is defined by at most $\sum\limits_{k^\prime=1}^{k}\binom{N}{k^\prime}\cdot(2^{N-k^\prime}-1) $ linear inequalities:
\begin{equation}\label{eq:SP-k}
\text{SP}(k) = \hspace{-0.9cm}\bigcap\limits_{\begin{array}{c}\forall R,|R|\le k;\\ \forall x\in X^{(R)}\setminus x^{(R)}\end{array}}\hspace{-1.25cm} \left\{(J, h)~ \vert~ E(x^{(R)} | J, h) > E(x~ \vert~ J, h)\right\},
\end{equation}
were some of these linear inequalities on the rand hand side may be redundant.

{\bf Remark.} In the case of $k=1$ (which, obviously, applies only if all the initial infections are at a single nodes, i.e. $\forall {\cal I}\in \Upsilon,\ |{\cal I}|=1$), there are at most, $N \cdot (2^{N-1} - 1)$ linear inequalities.

We illustrate the geometry of the Ising model over the triangle graph (three nodes connected in a loop, $K_3$) in Fig.~\ref{fig:geometry} and Fig.~\ref{fig:SafePoly}. For both illustrations, we fix the $h$ value to $-1$ at all the nodes, and we are thus exploring the remaining three degrees of freedom, $J_{12},J_{13},J_{23}$ (since $J$ is symmetric),
which corresponds to exploring interactions within the class of attractive Ising models, $\forall a,b=1,2,3:\ J_{ab}\in \mathbb{R}_+$. 

First, we consider the case when the only node $a=1$ is infected. In this simple setting there are four possible \acrshort{map} states, 
\[
(x_1,x_2,x_3)\in \{(+1, -1, -1), (+1, -1, +1), (+1, +1, -1),(+1,+1,+1)\}
\] 
shown in Fig.~(\ref{fig:geometry}) as green, blue, yellow and red, respectively. Finally, in the figure Fig.~(\ref{fig:SafePoly}) we plot the Safe Polytope $\text{SP}(1)$.
We observe that the two "polarized" \acrshort{map} states, $(+1,-1,-1)$ and $(+1,+1,+1)$, are seen most often among the samples, while domain occupied by the other two "mixed" \acrshort{map} states, $(+1,-1,+1)$ and $(+1,+1,-1)$ is much smaller, with the two modes  positioned on the interface between the two polarized states.

As will be shown below in the next Section, the polarization phenomena with only two "polarized" \acrshort{map} states, which we coin in the following the two polarized modes, which we see on this simple triangle example, is generic for the attractive Ising model.  

\begin{figure}[t]
\centering
\includegraphics[width=0.9\columnwidth]{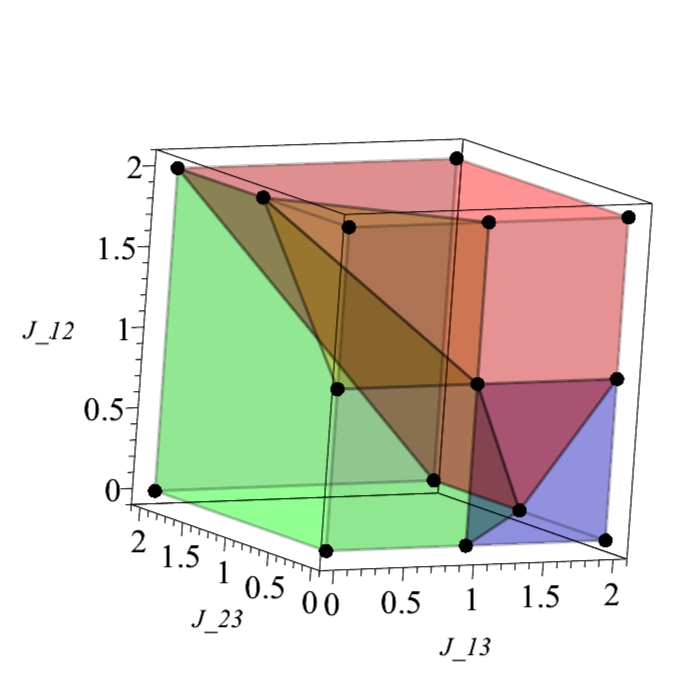} 
\caption[Geometry of the attractive Ising model]{Geometry of the attractive Ising model illustrated  on the example of a triangle graph ($K_3$) when a single node is infected. See explanations in the text.}
\label{fig:geometry}
\end{figure}

\begin{figure}[t]
\centering
\includegraphics[width=0.9\columnwidth]{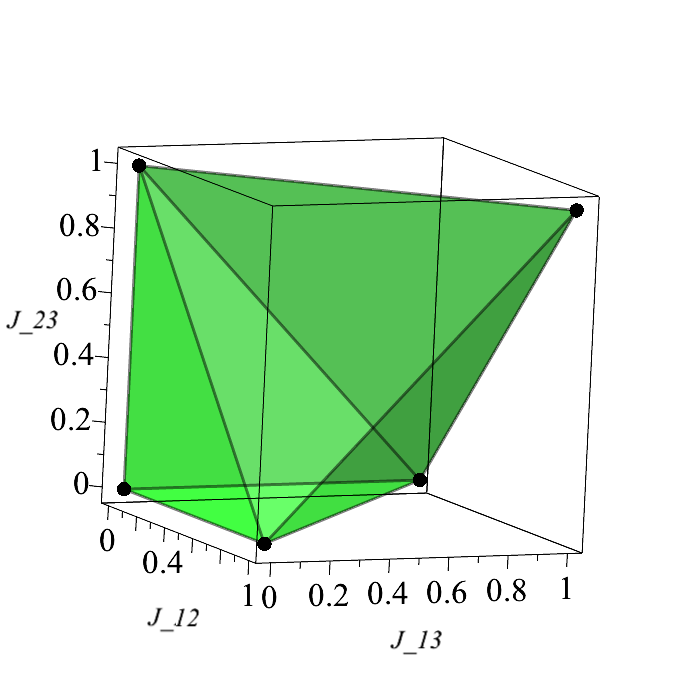} 
\caption[The safe polytope]{The Safe Polytope illustrated on the example of a triangle graph ($K_3$) with field vector $h = [-1, -1, -1]$. See explanations in the text.}
\label{fig:SafePoly}
\end{figure}

\section{Two Polarized Modes}\label{sec:Polarized}

{\bf Definition.} Consider a particular subset of the initially infected nodes, ${\cal I}$ (where thus, $\forall a\in{\cal I}:\ x_a=+1$). We call the \acrshort{map} state of the model  {\it polarized} when one of the following is true:  {\bf (i) }  only initially infected nodes show $+1$ within the \acrshort{map} solution, $\forall a\in {\cal I}:\ x_a=+1,\ \forall b\in{\cal V}\setminus{\cal I}:\ x_b=-1$ or {\bf (ii)} all nodes within the \acrshort{map} state show $+1$, $\forall a\in{\cal V}:\ x_a=+1$. We call a \acrshort{map} state {\it mixed} otherwise.

Experimenting with many dense graphs, which are typical in the pandemic modeling of modern cities with extended infrastructures and multiple destinations visited by many inhabitants, we observe that the two polarized \acrshort{map} states dominate generically,  while the mixed states are extremely rare. 

Fig.~\ref{fig:density} illustrates results of one our ensemble of random \acrshort{imp}s' experiments. We, first, fix $N$ to $20$, pick $M$ such that $M\le N(N-1)/2=190$ and then generate at random $M$ edges connecting the $20$ nodes. 
Then, for each of the random graphs (characterized by its own $M$) we generate $500$ random samples of $(J,h)$, representing attractive Ising models. Finally, we find the \acrshort{map} state for each \acrshort{imp} instance, count the number of mixed states and show the dependence of the fractions of the mixed states (in the sample set) in the Fig.~(\ref{fig:density}). A fast decrease of the proportion of the mixed states is observed with an increase in $M$.  

\begin{figure}[t]
\centering
\includegraphics[width=0.9\columnwidth]{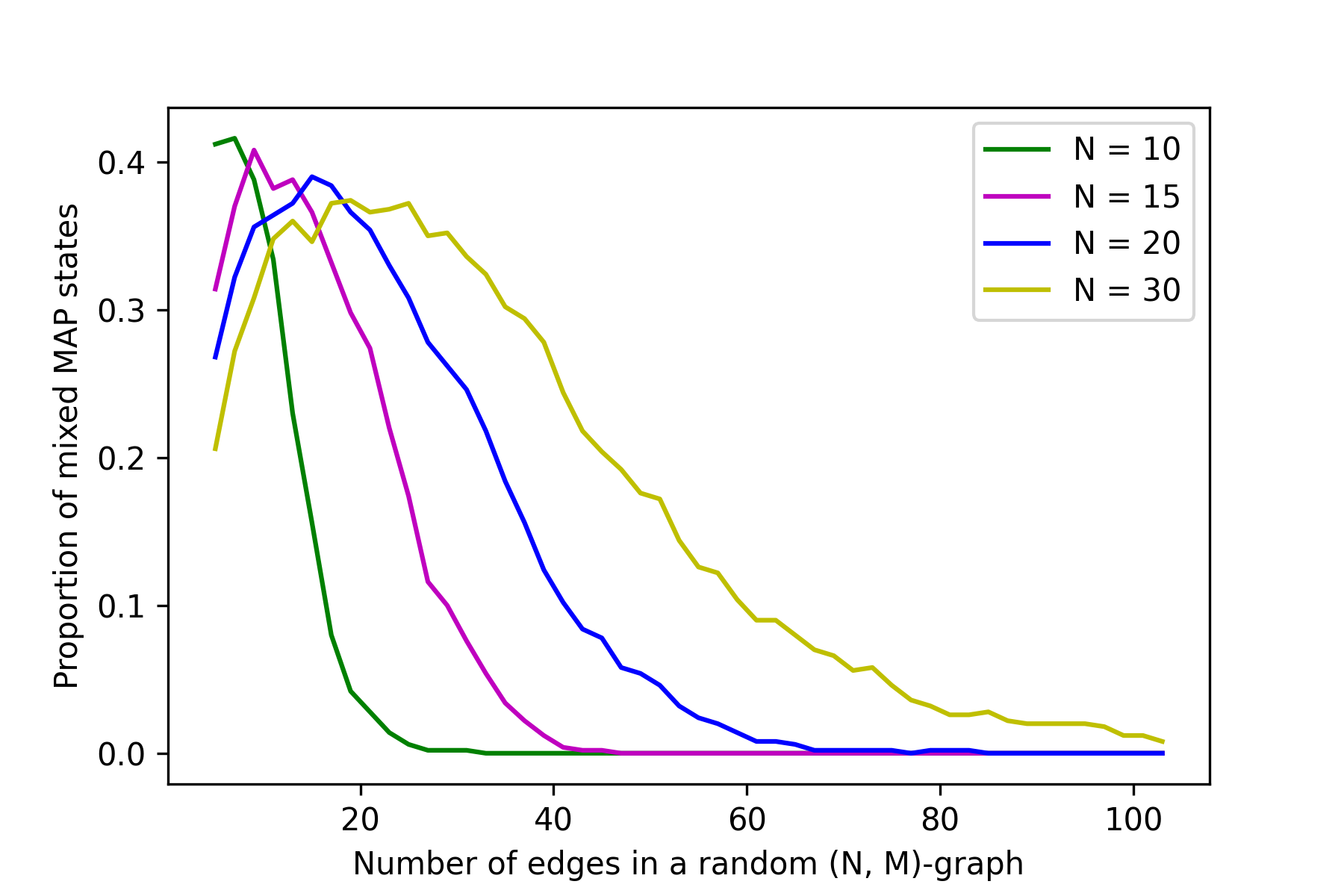} 
\caption[Proportion of the mixed states in all samples]{Proportion of the mixed states in all samples for an ensemble of the (attractive) Ising Model of Pandemic over graphs with $N$ nodes, shown as a function of the varying number of edges, $M$. Each shown point is the result of the averaging over $500$ random instances of the $(J,h)$ over the same graph. (See text for additional details.)} 
\label{fig:density}
\end{figure}

\begin{figure}[t]
\centering
\includegraphics[width=0.9\columnwidth]{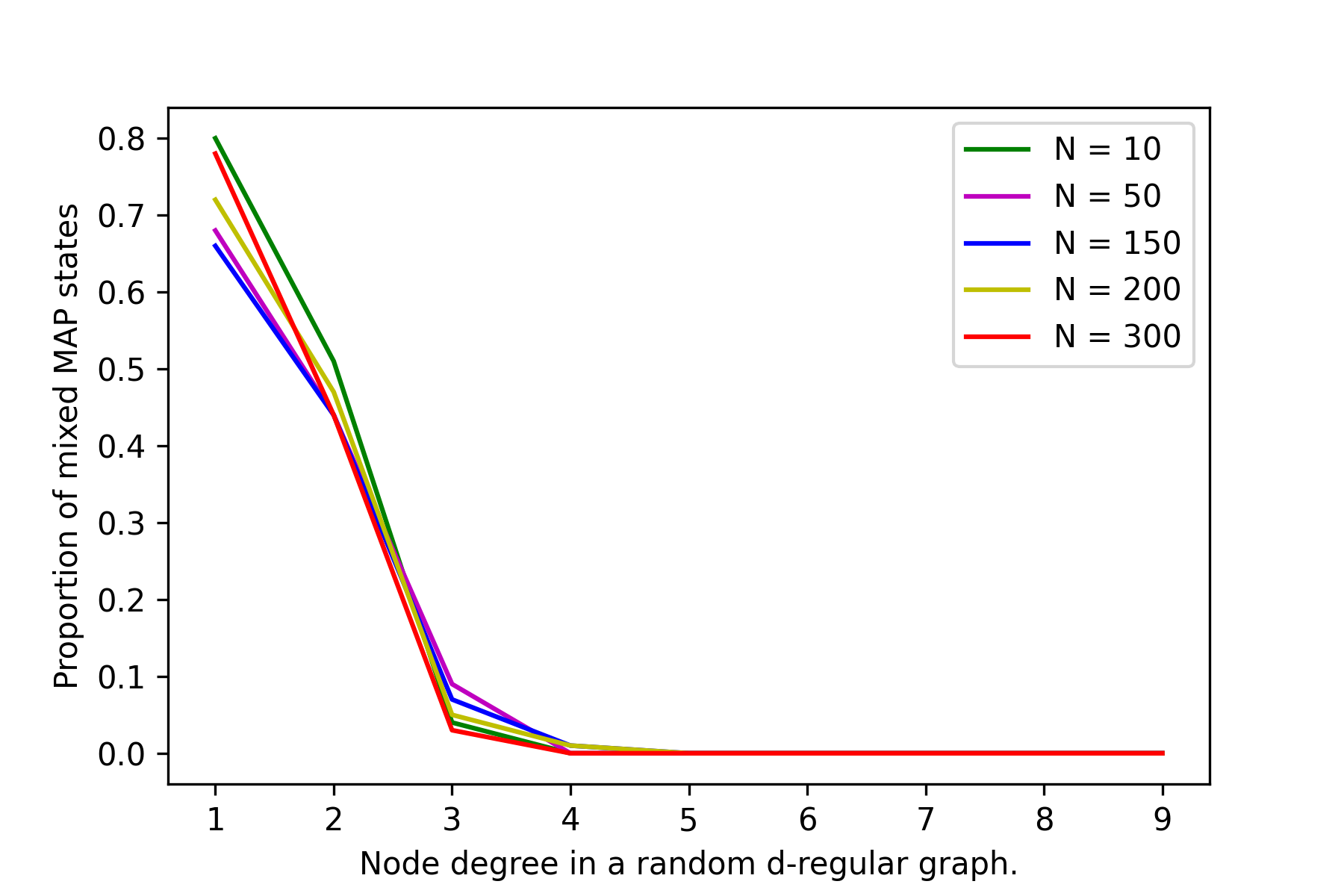} 
\caption[Proportion of the mixed-to-polarized states]{Proportion of the mixed-to-polarized states for an ensemble of the (attractive) Ising Model of Pandemic over $d$-regular graphs with $N$ nodes, shown as a function of $d$. Each point is the result of  averaging over $100$ random instances of the $(J,h)$ over different random graphs with the same node degree. (See text for additional details.)}
\label{fig:density_p}
\end{figure}

Extension of these experiments (see Fig.~(\ref{fig:density_p})) suggests that when we consider an ensemble of \acrshort{imp}s over graphs with $N$ nodes and the average degree $\alpha=O(1)$ which is sufficiently large (so that the graph is sufficiently dense) and increase $N$, we observe that the \acrfull{msp}, or equivalently proportion of the mixed-to-polarized states, decreases dramatically. Moreover, based on the experiments, we conjecture that the MSP decays to zero at $\alpha>\alpha_c$, but it saturates at $\alpha<\alpha_c$,  where $\alpha_c$ is the threshold depending on the ensemble details.  This threshold behavior is akin to the phase transition that occurred in many models of the spin glass theory \cite{MezardParisiVirasoro} and many models of the Computer Science and Theoretical Engineering defined over random graphs and considered in the thermodynamic limit, i.e. at $N\to \infty$. See e.g. \cite{RichardsonUrbanke} (application in the Information Theory, and specifically in the theory of the Low Density Parity Check Codes) \cite{MezardMontanari} (applications in the Computer Science, and specifically for random SAT and related models) and references therein. We postpone further discussions of the conjecture for a future publication (see also brief discussion in Section \ref{sec:Conclusions}). 

We will continue discussion of the two-mode solution in the next Section. 

\section{Projecting to the Safe Polytope}\label{sec:Projection}

In this Section we aim to summarize all the findings so far to resolve the Prevention Challenge formulated in Section \ref{sec:Prevention}, specifically in  Eq.~(\ref{eq:prevention-Q}) stating the problem as finding a minimal projection to the Safety Domain/Polytope from its exterior. The task is well defined, but in general, and as shown in Section \ref{sec:Geometry}, it is too complex  -- as the description of the Safety Polytope (number of linear constraints, required to define it) is exponential in the system size (number of nodes in the graph).  However, the two-mode approximation, introduced in Section \ref{sec:Polarized}, suggests a path forward: use the two-mode approximation and therefore remove all the linear constraints but one, separating the two polarized states.

Let us denote the two-mode approximation of the Safe Polytope by $\widehat{SP}(\Upsilon)$, where thus $k$ in the original Safe Polytope, $SP(k)$, is replaced by the set $\Upsilon$ of all the initial infection patters. Then we write,
\begin{gather}\label{eq:SP-Upsilon}
    \widehat{SP}(\Upsilon)=\bigcap\limits_{{\cal I}\in \Upsilon}\big\{(J, h)~ \vert~ E(+1^{2N} ~ \vert~ J, h) \geq  E(x^{({\cal I})}~ \vert~ J, h)\big\},
\end{gather}
where, $\forall a\in{\cal I}:\ x^{({\cal I})}_a=+1$ and $\forall b\in {\cal V}\setminus {\cal I}:\ x^{({\cal I})}_b=-1$. Eq.~(\ref{eq:SP-Upsilon}) represents a polytope stated in terms of the $|\Upsilon|$ constraints. In particular, if $\Upsilon$ accounts for all the initial infections, ${\cal I}$, of size not large than $k$, then $|\Upsilon|=\sum_{k^\prime=1}^k \binom{N}{k^\prime}$: the number of constraints grows exponentially in the maximal size of the initial infections, however the number of the constraints remains tractable for any $k=O(1)$. Replacing conditions in Eq.~(\ref{eq:prevention-Q}) by  $\widehat{SP}(\Upsilon)$, defined in  Eq.~(\ref{eq:SP-Upsilon}), one arrives at the following tractable (in the case of $k=O(1)$) convex optimization expression answering the Prevention Challenge approximately (within the two-mode approximation):
\begin{gather}\label{eq:prevention-Q-two-mode}
    (\widehat{J}^{(\text{\tiny corr})},\widehat{h}^{(\text{\tiny corr})})=\text{arg}\min\limits_{(J,h)} C\left((J,h);(J^{(0)},h^{(0)})\right)_{\text{Eq.~(\ref{eq:SP-Upsilon})}}.
\end{gather}
This formula is the final result of this manuscript analytic evaluation. 
In the next Section we use Eq.~(\ref{eq:prevention-Q-two-mode}), with $C(\cdot;\cdot)$ substituted by the $l_1$-norm, to present the result of our experiments in a quasi-realistic setting describing a (hypothetical) pandemic attack and optimal defense, i.e., prevention scheme.

\section{Experiments}\label{sec:Experiments}


\subsection{Seattle data}

We illustrate our methodology on a case study of the city of Seattle. Seattle has 131 Census Tracts. (Each Census Tract includes 1 to 10 Census Block Groups with  600 to 3000 residents.) Each Census Tract represents 1200 to 8000 population, and its boundaries are designed to represent natural or urban landmarks and also to be persistent over a long period \cite{USBureauGlossary}. To reduce complexity, we merge census tracts into 20 regions. See Fig.~~\ref{fig:seattle20area}. To prepare this splitting of Seattle into 20 regions/nodes, we utilize geo-spatial information from the TIGER/Line Shapefiles project provided by U.S. Census Bureau \cite{USBureauTiger}. The travel data of Seattle was extracted from the Safegraph dataset \cite{SafeGraph-DataConsortium}, which provides anonymized mobile tracking data. Each data point in the Safegraph database describes the number of visits from a Census Block to a specific point of interest represented by latitude and longitude.   
Mobility data associated with travelers crossing the boundaries of Seattle was ignored.  
We then follow the methodology developed in \cite{GMPandemic} to combine the aggregated travel data with the epidemiological data, representing current state of infection in the area. This results in the estimation of the pair-wise interactions, $J$, parameterizing the Ising Model of Pandemic. We also come up with an exemplary (uniform over the system) local biases, $h$, completing the definition of the model.  (We remind that the prime focus of the manuscript is on developing methodology which is AI sound and sufficiently general. Therefore,  the data used in the manuscript are roughly representative of the situation of interest, however not fully practical.) We consider a situation with different levels of infection and chose $(J^{(0)},h^{(0)})$ stressed enough,  that is resulting in the prediction (answer to our Prediction Challenge), which lands the system in the dangerous domain -- outside of the Safety Polytope. 
\begin{figure}[t]
\centering
\includegraphics[width=0.9\columnwidth]{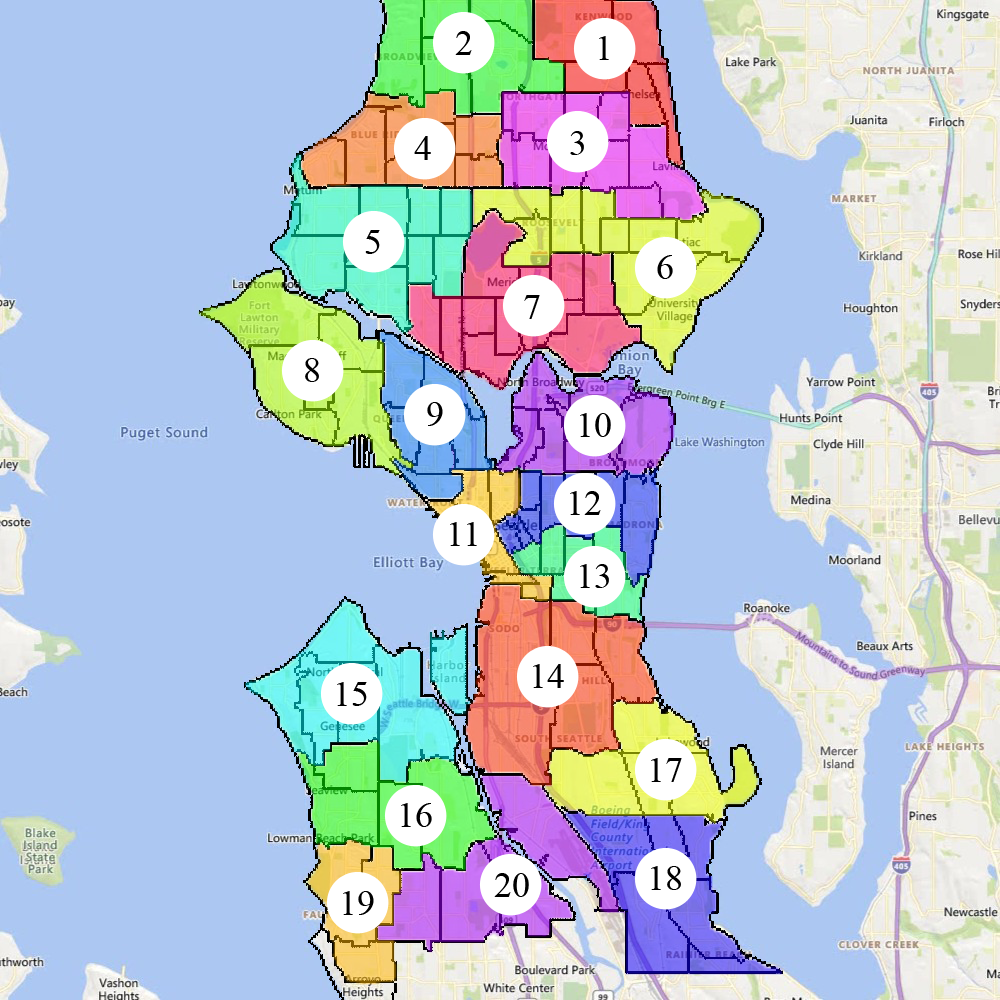}
\caption[Seattle case study areas]{Seattle case study areas and census tracts \cite{seattleCensusTractMap2010}.}
\label{fig:seattle20area}
\end{figure}

\subsection{Convex projection}

In all of our experiments, we have used the general-purpose Gurobi optimization solver \cite{gurobi} to compute the \acrshort{map} states and thus to validate the two-mode assumption. (We have also experimented with \cite{cvxpy}, but found it performing slower than Gurobi, at least over the relatively small samples considered in this proof of principles study. In the future, we plan to use existing, or developing new, \acrshort{lp} solvers designed specifically for finding the \acrshort{map} state of the attractive Ising model.) To illustrate our Prevention strategy, we took the Seattle data described above, and fed it as an input into the optimization (\ref{eq:prevention-Q-two-mode}), describing projection to the Safety Polytope, where $C(\cdot;\cdot)$ is substituted by the $l_1$ norm. CVXPY solver was used for this convex optimization task. Our code (python within jupyter notebook) is available at \url{https://github.com/mkrechetov/IsingMitigation}. 

Table \ref{tbl:projection} shows  results of our Prevention experiments on the Seattle data. We analyze $l_1$ projection to $\widehat{SP}(\Upsilon)$ where $\Upsilon$ consists of all the initial infection patterns consisting of up to $k$ nodes. In all of our experiments, the values of the field vector $h$ (uniform across the system) was fixed to $-1$. We observe that the number of constraints grows exponentially with $k$; however, the cost of intervention remains roughly the same. We intend to analyze the results of this and other (more realistic) experiments in future publications aimed at epidemiology experts and public health officials. 

\begin{table}[!t]

\centering
\begin{tabular}{l|c|c|c}
k & LP Constraints & Runtime & Cost\\
\hline\hline
1 & 801 &  1.65s   &   41.69  \\
\hline
2 & 991 &  3.04s   &  43.62  \\
\hline
3 &  2131 &  10.90s   &   44.30  \\
\hline
4 &  6976 &  100.08s  &   44.56  \\
\hline
\end{tabular}
\caption[Summary of our pandemic prevention experiments on the Seattle data]{Summary of our prevention experiments on the Seattle data. $k$, in the first column, is the maximal number of nodes in the initially infected patterns (all accounted for to construct the $k$-Safe Polytope). The second column shows number of linear constraints characterizing the $k$-Safe Polytope. Respective Run Time and Cost are shown in the 3rd and 4th column,  where Cost shows the difference in $l_1$ norm between the $(J^{(0)},h^{(0)})$, characterizing stressed but unmitigated regime, and the optimal prevention regime, resulting in $(\widehat{J}^{(\text{\tiny corr})},\widehat{h}^{(\text{\tiny corr})})$ computed according to Eq.~(\ref{eq:prevention-Q-two-mode}).}
\label{tbl:projection}
\end{table}

\section{Conclusions and Path Forward} \label{sec:Conclusions}

In this manuscript, written specifically for the AI community, we follow our prior work \cite{GMPandemic}, aimed at a broader interdisciplinary community, and explain respective inference (prediction) and control (prevention) questions/challenges. We use the language of \acrshort{gm}s,  which is one powerful tool in the modern arsenal of AI, and state the Prediction Challenge as a \acrshort{map} optimization over an attractive Ising model, which can be expressed generically as a solution of a tractable \acrfull{lp}. We then turn to the analysis of the prevention problem, which is set if the aforementioned prediction solution suggests that the probability of significant infection is above a pre-defined (by the public health experts) tolerance threshold.  We show that in its simplest formulation, the prevention problem is equivalent to finding minimal $l_1$ projection to the safety polytope, where the latter is defined by solving the aforementioned prediction problem.  In general, the polytope does not allow a description non-exponential in the size of the system. However, we suggested an approximation that allows to approximate the safety polytope efficiently - that is, linearly in the number of the initial infection patterns. The approximation is justified (empirically,  with supporting theoretical arguments,  however not yet backed by a mathematically rigorous theory) in the case when the interaction graph of the system (e.g., related to the system/city transportation and human-to-human interaction network) is sufficiently dense. We conclude by providing a quasi-realistic experimental demonstration on the \acrshort{gm} of Seattle. 

We conclude the manuscript with an incomplete list of AI challenges, presented in the order of importance (subjective), which need to be resolved to make the powerful \acrshort{gm} approach to pandemic prediction and prevention practical: 
\begin{itemize}
    \item Build a hierarchy of Probabilistic Graphical Models which allow more accurate (than Ising model) representation of the infection patterns over geographical and community graphs. The models may be both of the static (like Ising) or dynamic (like Independent Cascade Model) types. Extend the notion of the Safety Region (polytope) to the new \acrshort{gm} of pandemics.
    
    \item Consider the case when the resolution of the Prediction Challenge problem returns a positive answer - most likely future state of the system is safe,  and then develop the methodology which allows estimating the probability of crossing the safety boundary. In other words, we envision formulating and solving in the context of the \acrshort{gm} a problem which is akin to the one addressed in \cite{2019Owen,lukashevich2021importance,lukashevich2021power}: estimate the probability of finding the system outside of the Safety Polytope. 
    
    \item Construct other (than two-mode) approximations to the Safety Polytope. Approximations built on sampling of the boundaries of the safety polytope and learning (possibly reinforcement learning) are needed. 
    
    \item Develop the asymptotic (thermodynamic limit) theory which allows validating (and/or correcting systematically) the efficient (two-mode and other) approximations of the Safety Polytope. 
\end{itemize}

\phantomsection 
\addcontentsline{toc}{chapter}{Bibliography} 
\begin{singlespace}
\bibliographystyle{plainnat}
\bibliography{biblio/mypapers, biblio/schatten, biblio/epsdp, biblio/mitigation}
\end{singlespace}

\appendix

\end{document}